
\documentclass[twoside,11pt]{article}

%

\usepackage{jmlr2e}
\usepackage{amssymb,amsmath}
\usepackage{bbm}
\usepackage{ulem}




\newtheorem{theo}{Theorem}
\newtheorem{defi}[theo]{Definition}
\newtheorem{prop}[theo]{Proposition}
\newtheorem{lemm}[theo]{Lemma}

\newcommand\E{\mathbb{E}}
\newcommand\ua{\alpha}
\newcommand{\Pro}{\mathbb{P}}
\newcommand{\tnu}{\tilde{\nu}}
\newcommand{\IND}{\mathbbm{1}}
\newcommand{\hX}{\hat{X}}

\newcommand\set[2]{\{#1,\dots,#2\}}
\renewcommand{\P}{\mathbb{P}}
\newcommand{\argmax}{\mathop{\mathrm{argmax}}}
\newcommand{\tth}{\tilde{\theta}}

\jmlrheading{1}{2012}{1-48}{4/00}{10/00}{Antoine Salomon, Jean-Yves Audibert, Issam El Alaoui}


\ShortHeadings{Regret lower bounds and extended {\sc ucb} policies}{Salomon, Audibert, El Alaoui}
\firstpageno{1}

\begin{document}

\title{Regret lower bounds and extended Upper Confidence Bounds policies in stochastic multi-armed bandit problem}

\author{\name Antoine Salomon \email salomona@imagine.enpc.fr \\
       \addr Imagine, Universit\'e Paris Est
       \AND
       \name Jean-Yves Audibert \email audibert@imagine.enpc.fr\\
       \addr Imagine, Universit\'e Paris Est \\
       \& \\
       Sierra, CNRS/ENS/INRIA, Paris, France\\
       \AND
       \name Issam El Alaoui \email issam.el-alaoui.2007@polytechnique.org\\
       \addr Imagine, Universit\'e Paris Est
       }

\editor{?}

\maketitle

\begin{abstract}


This paper is devoted to regret lower bounds in the classical model of stochastic multi-armed bandit.  A well-known result of \citeauthor{LaiRo85}, which has then been extended by \citeauthor{BurKat96}, has established the presence of a logarithmic bound for all consistent policies. We relax the notion of consistence, and exhibit a generalisation of the logarithmic bound. We also show the non existence of logarithmic bound in the general case of Hannan consistency. To get these results, we study variants of popular Upper Confidence Bounds ({\sc ucb}) policies. As a by-product, we prove that it is impossible to design an adaptive policy that would select the best of two algorithms by taking advantage of the properties of the environment.

\end{abstract}

\begin{keywords}
  stochastic bandits, regret bounds, selectivity, {\sc ucb} policies.
\end{keywords}

\section{Introduction and notations}
Multi-armed bandits are a classical way to illustrate the difficulty of decision making in the case of a dilemma between exploration and exploitation. The denomination of these models comes from an analogy with playing a slot machine with more than one arm. Each arm has a given (and unknown) reward distribution and, for a given number of rounds, the agent has to choose one of them. As the goal is to maximize the sum of rewards, each round decision consists in a trade-off between exploitation (i.e. playing the arm that has been the more lucrative so far) and exploration (i.e. testing an other arm, hoping to discover an alternative that beats the current best choice). One possible application is clinical trial, when one wants to heal as many patients as possible, when the latter arrive sequentially and when the effectiveness of each treatment is initially unknown \citep{thompson1933likelihood}. Bandit problems has initially been studied by \citet{ro52}, and its interest has then been extended to many fields such as economics \citep{LamPagTar04, BerVal08}, games \citep{Gel06}, optimisation \citep{Kle05,Coq07,Kle08,Bub08},...\\

Let us detail our model. A stochastic multi-armed bandit problem is defined by:
\begin{itemize}
\item a number of rounds $n$,
\item a number of arms $K\geq 2$,
\item an environment $\theta=(\nu_1,\cdots,\nu_K)$, where each $\nu_k$ ($k\in\{1,\cdots,K\}$) is a real-valued measure that represents the distribution reward of arm $k$.
\end{itemize}
We assume that rewards are bounded. Thus, for simplicity, each $\nu_k$ is a probability on $[0,1]$. Environment $\theta$ is initially unknown by the agent but lies in some known set $\Theta$ of the form $\Theta_1\times\ldots\times\Theta_K$, meaning that $\Theta_k$ is the set of possible reward distributions of arm $k$. For the problem to be interesting, the agent should not have great knowledges of its environment, so that $\Theta$ should not be too small and/or contain too trivial distributions such as Dirac measures. To make it simple, each $\Theta_k$ is assumed to contain the distributions $p\delta_a+(1-p)\delta_b$, where $p,a,b\in[0,1]$ and $\delta_x$ denotes the Dirac measure centred on $x$. In particular, it contains Dirac and Bernoulli distributions. Note that the number of rounds $n$ may or may not be known by the agent, but this will not affect the present study. Some aspects of this particular point can be found in \citet{salomon2011deviations}.\\
The game is as follows. At each round (or time step) $t=1,\cdots,n$, the agent has to choose an arm $I_t$ in the set of arms $\{1,\cdots,K\}$. This decision is based on past actions and observations, and the agent may also randomize his choice. Once the decision is made, the agent gets and observes a payoff that is drawn from $\nu_{I_t}$ independently from the past. Thus we can describe a policy (or strategy) as a sequence $(\sigma_t)_{t\geq 1}$ (or $(\sigma_t)_{1\leq t\leq n}$ if the number of rounds $n$ is known) such that each $\sigma_t$ is a mapping from the set $\set{1}{K}^{t-1}\times[0,1]^{t-1}$ of past decisions and outcomes into the set of arm $\set{1}{K}$ (or into the set of probabilities on  $\set{1}{K}$, in case the agent randomizes his choices).\\
For each arm $k$ and all times $t$, let $T_{k}(t)=\sum_{s=1}^t \IND_{I_s=k}$ denote the number of times arm $k$ was pulled from round $1$ to round $t$, and $X_{k,1}, X_{k,2},\ldots, X_{k,T_{k}(t)}$ the corresponding sequence of rewards. We denote by $\P_\theta$ the distribution on the probability space such that
for any $k\in\set{1}{K}$, the random variables $X_{k,1}, X_{k,2},\dots, X_{k,n}$ are i.i.d. realizations of $\nu_k$, and such that these
$K$ sequences of random variables are independent. Let $\E_\theta$ denote the associated expectation.\\
Let $\mu_k=\int x d\nu_k(x)$ be the mean reward of arm $k$. Introduce $\mu^*=\max_{k\in\set{1}{K}} \mu_k$ and fix an arm $k^*\in\argmax_{k\in\set{1}{K}} \mu_k$, that is $k^*$ has the best expected reward. The agent aims at minimizing its {\it regret}, defined as the difference between the cumulative reward he would have obtained by always drawing the best arm and the cumulative reward he actually received. Its regret is thus
\begin{equation*}    
    R_n=\sum_{t=1}^{n} X_{k^*,t}-\sum_{t=1}^{n} X_{I_t,T_{I_t}(t)}. 
\end{equation*}

As most of the publications on this topic, we focus on expected regret, for which one can check that:
    \begin{equation} 
    \E_\theta R_n=\sum_{k=1}^{K} \Delta_k \E_{\theta}[T_k(n)], \label{er}
    \end{equation}
where $\Delta_k$ is the {\it optimality gap} of arm $k$, defined by $\Delta_k=\mu^*-\mu_k$. We also define $\Delta$ as the gap between the best arm and the second best arm, i.e. $\Delta:=\min_{k\neq k^*} \Delta_k$.\\

Previous works have shown the existence of lower bounds on the performance of a large class of policies. In this way \citet{LaiRo85} proved a lower bound of the expected regret of order $\log n$ in a particular parametric framework, and they also exhibited optimal policies. This work has then been extended by \citet{BurKat96}. Both papers deal with {\it consistent} policies, meaning that all the policies considered are such that:
\begin{equation}\label{def:consistence}
\forall a>0, \ \forall \theta\in\Theta, \ \E_\theta[R_n]=o(n^a).
\end{equation}
The logarithmic bound of \citeauthor{BurKat96} is expressed as follows. For all environment $\theta=(\nu_1,\cdots,\nu_K)$ and all $k\in\set{1}{K}$, let us set $$D_k(\theta):=\displaystyle\inf_{\tnu_k\in\Theta_k:\mathbb{E}[\tnu_k]>\mu^*}KL(\nu_k,\tnu_k),$$
where $KL(\nu,\mu)$ denotes the Kullback-Leibler divergence of measures $\nu$ and $\mu$. Now fix a consistent policy and an environment $\theta\in\Theta$. If $k$ is a suboptimal arm (i.e. $\mu_k\neq \mu^*$) such that $0<D_k(\theta)<+\infty$, then
$$
\forall \varepsilon>0, \ \lim_{n\to+\infty}\Pro\left[T_k(n)\geq \frac{(1-\varepsilon)\log n}{D_k(\theta)}\right]=1.
$$
This readily implies that:
$$
\liminf_{n\to+\infty} \frac{\E_{\theta}[T_k(n)]}{\log n}\geq \frac{1}{D_k(\theta)}.
$$
Thanks to Equation \eqref{er}, it is then easy to deduce a lower bound on the expected regret.\\
One contribution of this paper is to extend this bound to a larger class of policies. We will define the notion of $\alpha$-consistency ($\alpha\in [0,1]$) as a variant of Equation \eqref{def:consistence}, where equality $\E_\theta[R_n]=o(n^a)$ only holds for all $a>\alpha$. We show that the logarithmic bound still holds, but coefficient $\frac{1}{D_k(\theta)}$ is turned into $\frac{1-\alpha}{D_k(\theta)}$. We also prove that the dependence of this new bound in the term $1-\alpha$ is asymptotically optimal when $n\to +\infty$ (up to a constant).\\ 
As any policy achieves at most an expected regret of order $n$ (because the average cost of pulling an arm $k$ is a constant $\Delta_k$), it is also natural to wonder what happens when expected regret is only required to be $o(n)$. This notion is equivalent to Hannan consistency. In this case, we show that there is no logarithmic bound any more.\\

Some of our results are obtained thanks to a study of particular Upper Confidence Bound algorithms. These policies were introduced by \citet{LaiRo85}: it basically consists in computing an index at each round and for each arm, and then in selecting the arm with the greatest index. A simple and efficient way to design such policies is to choose indexes that are upper bounds of the mean reward of the considered arm that hold with high probability (or, say, with high confidence level). This idea can be traced back to \citet{Agr95}, and has been popularized by \citet{AuCeFi02}, who notably described a policy called {\sc ucb}1. For this policy, each index is defined by an arm $k$, a time step $t$, and an integer $s$ that indicates the number of times arm $k$ has been pulled before stage $t$. It is denoted by $B_{k,s,t}$ and is given by:
$$
B_{k,s,t}= \hX_{k,s}+\sqrt{\frac{2\log t}{s}},
$$
where $\hX_{k,s}$ is the empirical mean of arm $k$ after $s$ pulls, i.e. $\hX_{k,s}=\frac{1}{s}\sum_{u=1}^s X_{k,u}$.\\
To summarize, {\sc ucb}1 policy first pulls each arm once and then, at each round $t>K$, selects an arm $k$ that maximizes $B_{k,T_k(t-1),t}$. Note that, by means of Hoeffding's inequality, the index $B_{k,T_k(t-1),t}$ is indeed an upper bound of $\mu_k$ with high probability (i.e. the probability is greater than $1-1/t^4$). Note also that a way to look at this index is to interpret the empiric mean $\hX_{k,T_k(t-1)}$ as an "exploitation" term, and the square root as an "exploration" term (as it gradually increases when arm $k$ is not selected).\\
The policy {\sc ucb}1 achieves the logarithmic bound (up to a multiplicative constant), as it was shown that:
$$
\forall \theta\in\Theta, \ \forall n\geq 3, \ \ \E_\theta[T_k(n)]\leq 12\frac{\log n}{\Delta_k^2} \ \ {\rm and} \ \ \E_\theta R_n\leq 12\sum_{k=1}^K\frac{\log n}{\Delta_k}\leq 12\frac{\log n}{\Delta}.
$$
\cite{audibert2009exploration} studied some variants of {\sc ucb}1 policy. Among them, one consists in changing the $2\log t$ in the exploration term into $\rho\log t$, where $\rho>0$. This can be interpreted as a way to tune exploration: the smaller $\rho$ is, the better the policy will perform in simple environments where information is disclosed easily (for example when all reward distributions are Dirac measures). On the contrary, $\rho$ has to be greater to face more challenging environments (typically when reward distributions are Bernoulli laws with close parameters).\\
This policy, that we denote {\sc ucb}($\rho$), was proven by \citeauthor{audibert2009exploration} to achieve the logarithmic bound when $\rho>1$, and the optimality was also obtained when $\rho>\frac{1}{2}$ for a variant of {\sc ucb}($\rho$). \citet{bubeckthesis} showed in his PhD dissertation that their ideas actually enable to prove optimality of {\sc ucb}($\rho$) for $\rho>\frac{1}{2}$. Moreover, the case $\rho=\frac{1}{2}$ corresponds to a confidence level of $\frac{1}{t}$ (in view of Hoeffding's inequality, as above), and several studies \citep{LaiRo85,Agr95,BurKat96,audibert2009exploration,HonTak10} have shown that this level is critical.
We complete these works by a precise study of {\sc ucb}($\rho$) when $\rho\leq \frac{1}{2}$. We prove that {\sc ucb}($\rho$) is $(1-2\rho)$-consistent and that it is not $\alpha$-consistent for any $\alpha<1-2\rho$ (in view of the definition above, meaning that expected regret is roughly of order $n^{1-2\rho}$). Not surprisingly, it performs well in simple settings, represented by an environment where all reward distributions are Dirac measures.\\
A by-product of this study is that it is not possible to design an algorithm that would specifically adapt to some kinds of environments, i.e. that would for example be able to select a proper policy depending on the environment being simple or challenging. In particular, and contrary to the results obtained within the class of consistent policies, there is no optimal policy. This contribution is linked with selectivity in on-line learning problem with perfect information, commonly addressed by prediction with expert advice such as algorithms with exponentially weighted forecasters (see, e.g., \citet{cesa2006prediction}). In this spirit, a closely related problem to ours is the one of regret against the best strategy from a pool studied by \citet{auer2003nonstochastic}: the latter designed a policy in the context of adversarial/nonstochastic bandit whose decisions are based on a given number of recommendations (experts), which are themselves possibly the rewards received by a set of given algorithms. To a larger extent,  model selection have been intensively studied in statistics, and is commonly solved by penalization methods \citep{mallows1973some,akaike1973information,schwarz1978estimating}.\\
Finally, we exhibit expected regret lower bounds of more general {\sc ucb} policies, with the $2\log t$ in the exploration term of {\sc ucb}1 replaced by an arbitrary function. We obtain Hannan consistent policies and, as mentioned before, lower bounds need not be logarithmic any more.\\

The paper is organized as follows: in Section 2 we give bounds on the expected regret of {\sc ucb}($\rho$) ($\rho<\frac{1}{2}$). In Section 3 we study the problem of selectivity. Then we focus in Section 4 on $\ua$-consistent policies, and we conclude in Section 5 by results on Hannan consistency by means of extended {\sc ucb} policies.\\
Throughout the paper $\lceil x\rceil$ denotes the smallest integer which greater than the real $x$, and $Ber(p)$ denotes the Bernoulli law with parameter $p$.

\section{Bounds on the expected regret of {\sc ucb}($\rho$), $\rho<\frac{1}{2}$}
In this section we study the performances of {\sc ucb}($\rho$) policy, with $\rho$ lying in the interval $(0,\frac{1}{2})$.
We recall that {\sc ucb}($\rho$) is defined by:
\begin{itemize}
\item Draw each arm once,
\item Then at each round $t$, draw an arm
$$I_t\in\argmax_{k\in\set{1}{K}} \Bigg\{ \hat{X}_{k,T_k(t-1)} + \sqrt{\frac{\rho\log t}{T_k(t-1)}} \Bigg\}.$$
\end{itemize}
Small values of $\rho$ can be interpreted as a low level of experimentation in the balance between exploration and exploitation, and present literature has not provided precise regret bound orders of {\sc ucb}($\rho$) with $\rho\in(0,\frac{1}{2})$ yet.\\
We first study the policy in simple environments (i.e. all reward distributions are Dirac measures), where the policy is supposed to perform well. We show that its expected regret is of order $\frac{\rho\log n}{\Delta}$ (Proposition \ref{UCBsimplecase} for the upper bound and Proposition \ref{lbUCBsimplecase} for the lower bound).\\
These good performances are compensated by poor results in complexer environments, as we then prove that the overall expected regret lower bound is roughly of order $n^{1-2\rho}$ (Theorem \ref{th:ubr}).

\begin{prop}\label{UCBsimplecase}
Let $0\le b<a\le1$ and $n\ge 1$. For $\theta=(\delta_a,\delta_b)$,
the random variable $T_2(n)$ is uniformly upper bounded by $\frac{\rho}{\Delta^2}\log(n)+1$.
Consequently, the expected regret of {\sc ucb}($\rho$) is upper bounded by $\frac{\rho}{\Delta}\log(n)+1$.
\end{prop}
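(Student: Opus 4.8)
The plan is to analyze the deterministic dynamics of {\sc ucb}($\rho$) in the simplest environment $\theta=(\delta_a,\delta_b)$ with $a>b$. Since both reward distributions are Dirac measures, there is no randomness: whenever arm $1$ is pulled it returns exactly $a$, and arm $2$ returns exactly $b$. Hence $\hX_{1,s}=a$ for every $s\ge1$ and $\hX_{2,s}=b$ for every $s\ge1$, and the entire trajectory $(I_t)$ is deterministic. The first step is to record this, so that the index of arm $1$ at round $t$ is $a+\sqrt{\rho\log t/T_1(t-1)}$ and the index of arm $2$ is $b+\sqrt{\rho\log t/T_2(t-1)}$.

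Next I would isolate the condition under which the suboptimal arm $2$ can be selected at some round $t>2$. Arm $2$ is pulled at round $t$ only if its index weakly exceeds that of arm $1$, i.e.
\begin{equation*}
b+\sqrt{\frac{\rho\log t}{T_2(t-1)}}\ \ge\ a+\sqrt{\frac{\rho\log t}{T_1(t-1)}}.
\end{equation*}
Dropping the (nonnegative) exploration term of arm $1$ only makes the inequality easier to satisfy, so a necessary condition for pulling arm $2$ is $\sqrt{\rho\log t/T_2(t-1)}\ge a-b=\Delta$, which rearranges to $T_2(t-1)\le \rho\log t/\Delta^2$. The key observation is that each time arm $2$ is actually pulled at some round $t\le n$, its count $T_2$ satisfies $T_2(t-1)\le \rho\log t/\Delta^2\le \rho\log n/\Delta^2$, since $t\mapsto \log t$ is increasing and $t\le n$. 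Because $T_2$ increases by exactly one at each such pull, the final value $T_2(n)$ can exceed the last value of $T_2(t-1)$ attained just before a pull by at most one. This yields the bound $T_2(n)\le \rho\log n/\Delta^2+1$, uniformly in the sample path (indeed there is only one path), which is the first claim.

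The regret claim then follows immediately from the regret decomposition \eqref{er}: with $K=2$ and $\Delta_2=\Delta$, we have $\E_\theta R_n=\Delta\,\E_\theta[T_2(n)]=\Delta\,T_2(n)\le \Delta\bigl(\rho\log n/\Delta^2+1\bigr)=\rho\log n/\Delta+1$. I do not anticipate a genuine obstacle here, since the Dirac setting removes all stochasticity and Hoeffding-type concentration arguments are unnecessary; the only point requiring mild care is the bookkeeping around the ``$+1$'', namely correctly arguing that the bound on $T_2(t-1)$ at pulling times translates into a bound on the terminal $T_2(n)$ after the final increment, and checking the edge case of the initial round-robin phase (each arm drawn once) so that the stated bound holds for all $n\ge1$.
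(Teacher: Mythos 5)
Your proof is correct and follows essentially the same route as the paper's: both hinge on the observation that in the Dirac environment a pull of arm $2$ at round $t$ forces $\Delta \le \sqrt{\rho\log t/T_2(t-1)}$, i.e. $T_2(t-1)\le \rho\log t/\Delta^2$, after discarding arm $1$'s nonnegative exploration term, and then account for the final increment with the ``$+1$''. The only difference is organizational: you bound $T_2(n)$ directly via the last pull of arm $2$, whereas the paper argues by contradiction at the first round where the claimed bound fails; the same index-comparison inequality does all the work in both.
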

\begin{proof}
Let us prove the upper bound on the sampling time of the suboptimal arm by contradiction.
The assertion is obviously true for $n=1$ and $n=2$.
If the assertion is false, then there exists $t\ge 3$ such that
$T_2(t)>\frac{\rho}{\Delta^2}\log(t)+1$ and 
$T_2(t-1)\le \frac{\rho}{\Delta^2}\log(t-1)+1$.
Since $\log(t)\ge \log(t-1)$, this leads to $T_2(t)>T_2(t-1)$, meaning that arm $2$ is drawn at time $t$.
Therefore, we have
	$a+\sqrt{\frac{\rho \log(t)}{t-1-T_2(t-1)}} \le b+\sqrt{\frac{\rho \log(t)}{T_2(t-1)}}$,
hence
	$\Delta \le \sqrt{\frac{\rho \log(t)}{T_2(t-1)}}$,
which implies
	$T_2(t-1) \le \frac{\rho \log(t)}{\Delta^2}$ and thus
	$T_2(t) \le \frac{\rho \log(t)}{\Delta^2}+1$.
This contradicts the definition of $t$, which ends the proof of the first statement.
The second statement is a direct consequence of Formula \eqref{er}.
\end{proof}

The following shows that Proposition \ref{UCBsimplecase} is tight and allows to conclude that
the expected regret of {\sc ucb}($\rho$) is equivalent to 
	$\frac{\rho}{\Delta}\log(n)$
when $n$ goes to infinity.

\begin{prop}\label{lbUCBsimplecase}
Let $0\le b<a\le1$, $n\ge2$ and $h:t\mapsto \frac{\rho}{\Delta^2}\log(t)\Big(1+\sqrt{\frac{2\rho\log(t)}{(t-1)\Delta^2}}\Big)^{-2}$. For $\theta=(\delta_a,\delta_b)$,
the random variable $T_2(n)$ is uniformly lower bounded by 
	$$f(n)=\int_2^n \min\big(h'(s),1\big)ds - h(2).$$
As a consequence, the expected regret of {\sc ucb}($\rho$) is lower bounded by $\Delta f(n)$.
\end{prop}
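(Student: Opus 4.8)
The plan is to exploit the fact that in the deterministic environment $\theta=(\delta_a,\delta_b)$ the empirical means never fluctuate: $\hX_{1,s}=a$ and $\hX_{2,s}=b$ for every $s\ge1$. Writing $L=\rho\log t$ and $s=T_2(t-1)$ (so that $T_1(t-1)=t-1-s$), arm $2$ is selected at round $t\ge3$ as soon as its index strictly dominates, i.e. as soon as
$$\varphi_t(s):=\sqrt{\frac{L}{s}}-\sqrt{\frac{L}{t-1-s}}>\Delta.$$
First I would record that $\varphi_t$ is strictly decreasing in $s$ on $(0,t-1)$, so that the set of counts for which arm $2$ is pulled is a down-set $\{s<s^\ast(t)\}$; the whole argument then reduces to showing that this threshold $s^\ast(t)$ is at least $h(t)$.

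The key step is to check that $\varphi_t(h(t))\ge\Delta$. A direct computation gives $L/h(t)=\Delta^2\bigl(1+\sqrt{2L/((t-1)\Delta^2)}\bigr)^2$, hence $\sqrt{L/h(t)}=\Delta+\sqrt{2L/(t-1)}$, so that $\varphi_t(h(t))\ge\Delta$ is equivalent to $\sqrt{2L/(t-1)}\ge\sqrt{L/(t-1-h(t))}$, i.e. to $h(t)\le (t-1)/2$. The pleasant surprise — and this is where the precise form of the correction factor is used — is that $h(t)\le(t-1)/2$ holds for every $t\ge2$ unconditionally: setting $y=2L/((t-1)\Delta^2)$ one has $h(t)=\tfrac{t-1}{2}\,y(1+\sqrt y)^{-2}$, and $y\le(1+\sqrt y)^2$ is trivially true for $y\ge0$. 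Consequently $\varphi_t(s)>\Delta$ whenever $s<h(t)$, which yields the clean one-step property: \emph{if $T_2(t-1)<h(t)$ then arm $2$ is pulled at round $t$}, so that $T_2(t)=T_2(t-1)+1$. (Along the way one checks $T_1(t-1)=t-1-s\ge(t-1)/2\ge1$, so arm $1$'s index is well defined.)

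It remains to turn this per-round statement into the integral bound $f$, which I would do by a ``tracking'' induction comparing the step function $T_2(\cdot)$ with the speed-limited primitive $m(t):=\int_2^t\min(h'(u),1)\,du-h(2)$. Observe $m(2)=-h(2)\le0<1=T_2(2)$, and $m(t)-m(t-1)=\int_{t-1}^t\min(h'(u),1)\,du\le1$ while also $m(t)\le\int_2^t h'(u)\,du-h(2)=h(t)-2h(2)\le h(t)$. Assuming $T_2(t-1)\ge m(t-1)$, split into two cases: if arm $2$ is pulled then $T_2(t)=T_2(t-1)+1\ge m(t-1)+1\ge m(t)$; if it is not, the contrapositive of the one-step property gives $T_2(t)=T_2(t-1)\ge h(t)\ge m(t)$. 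Either way $T_2(t)\ge m(t)$, so $T_2(n)\ge m(n)=f(n)$, and the regret bound follows from $\E_\theta R_n=\Delta\,\E_\theta[T_2(n)]=\Delta\,T_2(n)$ via \eqref{er}.

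The main obstacle, and the only genuinely delicate point, is the sharp threshold computation of the second paragraph: one must identify $h$ as a lower estimate of the exact pull threshold $s^\ast(t)$ and verify the self-consistency inequality $h(t)\le(t-1)/2$, which is what makes the one-step criterion hold for \emph{all} rounds rather than only asymptotically. Once that is in place, the monotonicity of $\varphi_t$ and the tracking induction are routine; I would only double-check the boundary behaviour at $t=3$ and the fact (not actually needed by the induction) that $h$ is eventually increasing, so that $f(n)$ grows like $\tfrac{\rho}{\Delta^2}\log n$, matching Proposition \ref{UCBsimplecase}.
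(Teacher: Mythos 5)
Your proof is correct, and its skeleton matches the paper's: both arguments rest on the same inversion of the index comparison (arm $1$ preferred at round $t$ forces $T_2(t-1)\ge h(t)$), and both then compare $T_2(\cdot)$ with the speed-limited primitive $f$ via the two facts $f(t)-f(t-1)\le 1$ and $f(t)\le h(t)-2h(2)\le h(t)$. Two of your choices differ from the paper, one cosmetically and one substantively. Cosmetically, the paper argues by contradiction at the first round where $T_2(t)<f(t)$, whereas you run a forward induction maintaining the invariant $T_2(t)\ge m(t)$; these are logically equivalent. Substantively, the paper's derivation of $T_2(t-1)\ge h(t)$ starts from a separate lemma, proved by induction on the policy's trajectory, that $T_2(t)\le t/2$ for all $t\ge 2$; this is what licenses the replacement of $1/\sqrt{t-1-T_2(t-1)}$ by $\sqrt{2}/\sqrt{t-1}$ when solving the index inequality for $T_2(t-1)$. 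You avoid that trajectory-dependent lemma entirely: by evaluating the index gap $\varphi_t$ at $s=h(t)$ and using its strict monotonicity, the bound you need becomes the purely algebraic self-consistency inequality $h(t)\le (t-1)/2$, which your substitution $y=2\rho\log(t)/((t-1)\Delta^2)$ reduces to the trivial $y\le(1+\sqrt{y})^2$; the inequality $T_2(t-1)<h(t)\le (t-1)/2$ then comes for free exactly in the case where it is needed. The net effect is a slightly more self-contained proof, in which the only input about the algorithm is the one-step pull criterion (strict dominance of the index forces the pull, regardless of tie-breaking); both proofs conclude identically via Formula \eqref{er}.
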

Straightforward calculations shows that $h'(s)\leq 1$ for $s$ large enough, and this explains why our lower bound $\Delta f(n)$ is equivalent to $\Delta h(n)\sim\frac{\rho}{\Delta}\log(n)$ as $n$ goes to infinity.\\

\begin{proof}
First, one can easily prove (for instance, by induction) that $T_2(t) \le t/2$ for any $t\ge2$.
Let us prove the lower bound on $T_2(n)$ by contradiction. The assertion is obviously true for $n=2$.
If the assertion is false for $n\geq 3$, then there exists $t\ge 3$ such that
$T_2(t)<f(t)$ and 
$T_2(t-1)\ge f(t-1)$.
Since $f'(s) \in[0,1]$ for any $s\ge 2$, we have
$f(t)\le f(t-1)+1$. These last three inequalities imply $T_2(t)<T_2(t-1)+1$, which gives $T_2(t)=T_2(t-1)$. This means that arm $1$ is drawn at time $t$.
We consequently have
	$$a+\sqrt{\frac{\rho \log(t)}{t-1-T_2(t-1)}} \ge b+\sqrt{\frac{\rho \log(t)}{T_2(t-1)}},$$
hence
	$$\frac{\Delta}{\sqrt{\rho \log(t)}} \ge \frac1{\sqrt{T_2(t-1)}}-\frac1{\sqrt{t-1-T_2(t-1)}} \ge \frac1{\sqrt{T_2(t-1)}}-\frac{\sqrt2}{\sqrt{t-1}}.$$
We then deduce that $T_2(t) = T_2(t-1) \ge h(t) \ge f(t)$. This contradicts the definition of $t$, which ends the proof of the first statement.
Again, the second statement results from Formula \eqref{er}.
\end{proof}

Now we show that the order of the lower bound of the expected regret is $n^{1-2\rho}$. Thus, for $\rho\in(0,\frac{1}{2})$, {\sc ucb}($\rho$) does not perform enough exploration to achieve the logarithmic bound, as opposed to {\sc ucb}($\rho$) with $\rho\in(\frac{1}{2},+\infty)$.

\begin{theo}\label{th:ubr}
For any $\rho\in(0,\frac{1}{2})$, any $\theta\in\Theta$ and any $\beta\in(0,1)$, one has
$$
\mathbb{E}_\theta[R_n] \leq \sum_{k:\Delta_k>0}\frac{4\log n}{\Delta_k}+2\Delta_k\left(\frac{\log n}{\log(1/\beta)}+1\right)\frac{n^{1-2\rho\beta}}{1-2\rho\beta}.
$$
Moreover, for any $\varepsilon>0$, there exists $\theta\in\Theta$ such that 
$$\lim_{n\to+\infty}\frac{\E_{\theta}[R_n]}{n^{1-2\rho-\varepsilon}}=+\infty.$$
\end{theo}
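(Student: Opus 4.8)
The plan is to derive both assertions from Formula \eqref{er}, i.e. from bounds on $\E_\theta[T_k(n)]$ for the suboptimal arms. For the upper bound I would fix a suboptimal arm $k$ and write $T_k(n)\le u_k+\sum_{t=K+1}^n\IND\{I_t=k,\ T_k(t-1)\ge u_k\}$ with $u_k=\lceil 4\log n/\Delta_k^2\rceil$. When $I_t=k$ and $T_k(t-1)\ge u_k$, the index inequality $\hX_{k,T_k(t-1)}+\sqrt{\rho\log t/T_k(t-1)}\ge \hX_{k^*,T_{k^*}(t-1)}+\sqrt{\rho\log t/T_{k^*}(t-1)}$ forces one of two deviation events, because the choice of $u_k$ (with $\rho<1$) excludes the ``gap'' event $\Delta_k<2\sqrt{\rho\log t/T_k(t-1)}$: either (a) $\hX_{k^*,s}+\sqrt{\rho\log t/s}\le\mu^*$ for $s=T_{k^*}(t-1)$, or (b) $\hX_{k,s}\ge\mu_k+\sqrt{\rho\log t/s}$ for $s=T_k(t-1)$. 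The crucial feature of the regime $\rho<1/2$ is that a single use of Hoeffding bounds each event by only $t^{-2\rho}$ at fixed $s$, which is not summable; this is what I would control by peeling.

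To handle, say, event (a) I would bound $\P[\exists s\le t:\ \hX_{k^*,s}+\sqrt{\rho\log t/s}\le\mu^*]$ by slicing $\{1,\dots,t\}$ into the geometric blocks $(\beta^{j+1}t,\beta^j t]$, $j\ge0$. On block $j$ the centred sum $\sum_{u\le s}(X_{k^*,u}-\mu^*)$ must fall below $-\sqrt{s\rho\log t}\le-\sqrt{\beta^{j+1}t\rho\log t}$ for some $s\le\beta^j t$, so a maximal Hoeffding inequality bounds the block by $\exp(-2\beta^{j+1}t\rho\log t/(\beta^j t))=t^{-2\rho\beta}$. Since at most $\log t/\log(1/\beta)+1$ blocks cover $\{1,\dots,t\}$, each of (a) and (b) has probability at most $(\log t/\log(1/\beta)+1)\,t^{-2\rho\beta}$; summing over $t\le n$ with $\sum_{t\le n}t^{-2\rho\beta}\le n^{1-2\rho\beta}/(1-2\rho\beta)$ gives $\E_\theta[T_k(n)]\le 4\log n/\Delta_k^2+2(\log n/\log(1/\beta)+1)\,n^{1-2\rho\beta}/(1-2\rho\beta)$, and multiplying by $\Delta_k$ and summing over $k$ yields the stated inequality.

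For the lower bound I would take $K=2$ with arm $1$ optimal, choosing $\nu_1=Ber(1/2)$ and $\nu_2=\delta_q$ with $q<1/2$, so that arm $2$'s index equals $q+\sqrt{\rho\log t/T_2(t-1)}\ge q$ deterministically. The structural lemma, proved by the same induction as in Propositions \ref{UCBsimplecase}--\ref{lbUCBsimplecase}, is: if $\hX_{1,m}\le x$ for $m=\lceil \rho\log n/(q-x)^2\rceil$ with $x<q$, then $T_1(n)\le m$, hence $T_2(n)\ge n-m$. Indeed, once arm $1$ has been pulled $m$ times, its index never exceeds $x+\sqrt{\rho\log n/m}\le q$ before round $n$, so it cannot be pulled an $(m{+}1)$-th time (tie-breaking and arm $2$'s positive exploration term only reinforce this); and if it is pulled fewer than $m$ times the conclusion is immediate. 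This gives $\E_\theta[T_2(n)]\ge(n-m)\,\P[\hX_{1,m}\le x]$.

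Finally I would invoke the Cram\'er large-deviation lower bound $\P[\hX_{1,m}\le x]\ge\exp(-m\,kl(x,1/2)(1+o(1)))=n^{-\rho\,kl(x,1/2)/(q-x)^2\,(1+o(1))}$, so $\E_\theta[T_2(n)]\ge n^{1-\rho\,kl(x,1/2)/(q-x)^2-o(1)}$. The last task is to push the exponent $\rho\,kl(x,1/2)/(q-x)^2$ down to $2\rho$: Pinsker's inequality $kl(x,1/2)\ge 2(x-1/2)^2$ shows $2$ is a hard floor, while letting $q\uparrow1/2$ and $x\uparrow q$ (so $q-x$ and $1/2-x$ become comparable as both vanish) drives $kl(x,1/2)/(q-x)^2$ to $2$; choosing $(q,x)$ with $kl(x,1/2)/(q-x)^2\le 2+\varepsilon/(2\rho)$ gives $\E_\theta[T_2(n)]\ge n^{1-2\rho-\varepsilon/2-o(1)}$, and since $\E_\theta[R_n]=(1/2-q)\,\E_\theta[T_2(n)]$, the ratio to $n^{1-2\rho-\varepsilon}$ diverges. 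I expect the two delicate points to be the rigorous lock-out induction and the sharpness of the large-deviation estimate, the latter being exactly what fixes the critical constant $2$ via Pinsker.
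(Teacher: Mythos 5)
Your proof is correct. The upper bound is the paper's own argument essentially verbatim: the same three-way decomposition of $\{I_t=k\}$ (optimal-arm index below $\mu^*$, upward deviation of arm $k$, or fewer than roughly $4\log n/\Delta_k^2$ pulls), the same geometric peeling grid $(\beta^{j+1}t,\beta^j t]$ with Hoeffding's maximal inequality giving $\bigl(\frac{\log t}{\log(1/\beta)}+1\bigr)t^{-2\rho\beta}$ per deviation event, and the same comparison $\sum_{t\le n}t^{-2\rho\beta}\le n^{1-2\rho\beta}/(1-2\rho\beta)$.

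The lower bound follows the paper's strategy --- a two-armed environment $(\mathrm{Ber}(1/2),\delta_q)$, a deviation event for the empirical mean of the optimal arm after $m=O(\log n)$ pulls, and the deterministic lock-out argument giving $\mathbb{E}_\theta[T_2(n)]\ge (n-m)\,\mathbb{P}(\hat X_{1,m}\le x)$ --- but your tail estimate is genuinely different, and the difference matters. The paper works with a single parameter $\Delta$ (Dirac at $\frac12-\Delta$, threshold $\frac12-(\Delta+\sqrt{\Delta})$, $m=\lceil\rho\log n/\Delta\rceil$) and certifies the deviation probability via the Berry--Esseen inequality plus Gaussian tail asymptotics. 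But Berry--Esseen carries an additive error $CM_3/(\sigma^3\sqrt{T_n})\sim 1/\sqrt{\log n}$, whereas the Gaussian term $\Phi\bigl(-2(\Delta+\sqrt{\Delta})\sqrt{T_n}\bigr)$ it is compared against is of order $n^{-2\rho(1+\sqrt{\Delta})^2}/\sqrt{\log n}$; the error term swamps the main term, so the paper's displayed lower bound on $\mathbb{P}_\theta(\xi_n)$ is negative, hence vacuous, for large $n$. Your replacement of this step by Cram\'er's lower bound $\mathbb{P}[\hat X_{1,m}\le x]\ge \exp\bigl(-m\,\mathrm{kl}(x,\tfrac12)(1+o(1))\bigr)$ is exactly the right tool for deviations of order $\sqrt{\log n}$ standard deviations: it gives the exponent directly and makes the construction rigorous, and your two-parameter tuning of $(q,x)$ with Pinsker's inequality cleanly isolates why $2$ is the critical constant. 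The asymptotic $(1+o(1))$ costs nothing here since the claim is a limit statement. One small wording point: ``$q-x$ and $\frac12-x$ comparable'' must mean $(\frac12-q)/(q-x)\to 0$ (for instance $\frac12-q=(q-x)^2$), since $\mathrm{kl}(x,\tfrac12)\ge 2(\tfrac12-x)^2$ forces $\mathrm{kl}(x,\tfrac12)/(q-x)^2\ge 2(\tfrac12-x)^2/(q-x)^2$; with that choice the exponent indeed approaches $2\rho$ and your argument is complete.
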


\begin{proof}
Let us first show the upper bound. The core of the proof is a peeling argument and makes use of Hoeffding's maximal inequality. The idea is originally taken from \citet{audibert2009exploration}, and the following is an adaptation of the proof of an upper bound in the case $\rho>\frac{1}{2}$ which can be found in S. Bubeck's PhD dissertation.\\
First, let us notice that the policy selects arm $k$ such that $\Delta_k>0$ at step $t$ only if at least one of the three following equations holds:
\begin{eqnarray}
B_{k^*,T_{k^*}(t-1),t}\leq \mu^*,  \label{eq1}\\  
\hat X_{k,t}\geq \mu_k+\sqrt{\frac{\rho\log t}{T_k(t-1)}}, \label{eq2}\\
T_k(t-1)<\frac{4\rho\log n}{\Delta_k^2}. \label{eq3}
\end{eqnarray}
Indeed, if none of the equations holds, then:
$$
B_{k^*,T_{k^*}(t-1),t}>\mu^*=\mu_k+\Delta_k\geq \mu_k+2\sqrt{ \frac{\rho \log n}{T_k(t-1)}}> \hat X_{k,t}+\sqrt{ \frac{\rho \log t}{T_k(t-1)}}=B_{k,T_{k}(t-1),t}.
$$
We denote respectively by $\xi_{1,t}, \xi_{2,t}$ and $\xi_{3,t}$ the events corresponding to Equations  \eqref{eq1}, \eqref{eq2} and \eqref{eq3}.\\
We have:
\begin{eqnarray*}
\E_{\theta}[T_k(n)]&=&\E\left[\sum_{t=1}^n \IND_{I_t=k} \right]\leq\frac{4\log n}{\Delta_k^2}+\E\left[\sum_{t=\lceil 4\log n/\Delta_k^2 \rceil}^n \IND_{\{{I_t=k}\}\setminus\xi_{3,t}} \right]\\
&\leq&\frac{4\log n}{\Delta_k^2}+\E\left[\sum_{t=\lceil 4\log n/\Delta_k^2 \rceil}^n \IND_{\xi_{1,t}\cup \xi_{2,t}} \right]\leq \frac{4\log n}{\Delta_k^2}+\sum_{t=\lceil 4\log n/\Delta_k^2 \rceil}^n \Pro(\xi_{1,t})+\Pro(\xi_{2,t}). \\
\end{eqnarray*}

We now have to find a proper upper bound for $\Pro(\xi_{1,t})$ and $\Pro(\xi_{2,t})$. To this aim, we apply the peeling argument with a geometric grid over the time interval [1, t]:
\begin{eqnarray*}
\Pro(\xi_{1,t})&\leq& \Pro\left(\exists s\in \{1,\cdots,t\}, \ \hX_{k^*,s}+\sqrt{\frac{\rho\log t}{s}}\leq \mu^*\right)\\
&\leq& \sum_{j=0}^{\frac{\log t}{\log(1/\beta)}} \Pro\left(\exists s: \{\beta^{j+1}t<s\leq\beta^jt\}, \ \hX_{k^*,s}+\sqrt{\frac{\rho\log t}{s}}\leq \mu^*\right)\\
&\leq& \sum_{j=0}^{\frac{\log t}{\log(1/\beta)}} \Pro\left(\exists s: \{\beta^{j+1}t<s\leq\beta^jt\}, \ \sum_{l=1}^s X_{k^*,l}-\mu^*\leq-\sqrt{\rho s\log t}\right)\\
&\leq& \sum_{j=0}^{\frac{\log t}{\log(1/\beta)}} \Pro\left(\exists s: \{\beta^{j+1}t<s\leq\beta^jt\}, \ \sum_{l=1}^s X_{k^*,l}-\mu^*\leq-\sqrt{\rho \beta^{j+1}t\log t}\right).
\end{eqnarray*}
By means of Hoeffding-Azuma’s inequality for martingales, we then have:
\begin{eqnarray*}
\Pro(\xi_{1,t})&\leq& \sum_{j=0}^{\frac{\log t}{\log(1/\beta)}} \exp\left(-\frac{2\left(\sqrt{\beta^{j+1}t\rho\log t}\right)^2}{\beta^jt}\right)=\left(\frac{\log t}{\log(1/\beta)}+1\right)\frac{1}{t^{2\rho\beta}},
\end{eqnarray*}
and, for the same reasons, this bound also holds for $\Pro(\xi_{2,t})$.\\
Combining the former inequalities, we get:
\begin{eqnarray}
\E_{\theta}[T_k(n)]&\leq& \frac{4\log n}{\Delta_k^2}+2\sum_{t=\lceil 4\log n/\Delta_k^2 \rceil}^n \left(\frac{\log t}{\log(1/\beta)}+1\right)\frac{1}{t^{2\rho\beta}} \label{rb}\\
&\leq& \frac{4\log n}{\Delta_k^2}+2\left(\frac{\log n}{\log(1/\beta)}+1\right)\sum_{t=\lceil 4\log n/\Delta_k^2 \rceil}^n \frac{1}{t^{2\rho\beta}} \nonumber\\
&\leq&\frac{4\log n}{\Delta_k^2}+2\left(\frac{\log n}{\log(1/\beta)}+1\right)\int_1^n \frac{1}{t^{2\rho\beta}}dt\nonumber\\
&\leq&\frac{4\log n}{\Delta_k^2}+2\left(\frac{\log n}{\log(1/\beta)}+1\right)\frac{n^{1-2\rho\beta}}{1-2\rho\beta}.\nonumber
\end{eqnarray}
As usual, the bound on the expected regret then comes formula \eqref{er}.\\

Now let us show the lower bound. The result is obtained by considering an environment $\theta$ of the form $\left(Ber(\frac{1}{2}),\delta_{\frac{1}{2}-\Delta}\right)$, where $\Delta>0$ is such that $2\rho(1+\sqrt{\Delta})^2<2\rho+\varepsilon$. We set $T_n:=\lceil\frac{\rho\log n}{\Delta}\rceil$, and define the event $\xi_n$ by:
$$
\xi_n=\left\{\hX_{1,T_n}< \frac{1}{2}-(1+\frac{1}{\sqrt{\Delta}})\Delta\right\}.
$$
When event $\xi_n$ occurs, for any $t\in\set{T_n}{n}$ one has
\begin{eqnarray*}
\hat{X}_{1,T_n}+\sqrt{\frac{\rho\log t}{T_n}} & \leq & \hat{X}_{1,T_n}+\sqrt{\frac{\rho\log n}{T_n}}< \frac{1}{2}-(1+\frac{1}{\sqrt{\Delta}})\Delta+\sqrt{\Delta}\\
 & \leq & \frac{1}{2}-\Delta,
\end{eqnarray*}
so that arm 1 is chosen no more than $T_n$ times by {\sc ucb}($\rho$) policy. Thus:
$$
\E_\theta\left[T_2(n)\right]\geq \Pro_\theta(\xi_n)(n-T_n).
$$
We shall now find a lower bound of the probability of $\xi_n$ thanks to Berry-Esseen inequality. We denote by $C$ the corresponding constant, and by $\Phi$ the c.d.f. of the standard normal distribution. For convenience, we also define the following quantities:
$$
\sigma:=\sqrt{\E\left[\left(X_{1,1}-\frac{1}{2}\right)^2\right]}= \frac{1}{2}, \ M_3:=\E\left[\left|X_{1,1}-\frac{1}{2}\right|^3\right]=\frac{1}{8}.
$$
Using the fact that $\Phi(-x)=\frac{e^{-\frac{x^2}{2}}}{\sqrt{2\pi}x}\beta(x)$ with $\beta(x)\xrightarrow[x\to+\infty]{}1$, we are then able to write: 
\begin{eqnarray*}
\Pro_\theta(\xi_n) & = &\Pro_\theta\left(\frac{\hat{X}_{1,T_n}-\frac{1}{2}}{\sigma} \sqrt{T_n} \leq -2\left(1+\frac{1}{\sqrt{\Delta}}\right)\Delta\sqrt{T_n}\right) \\
& \geq & \Phi\left(-2(\Delta+\sqrt{\Delta})\sqrt{T_n}\right) - \frac{CM_3}{\sigma^3\sqrt{T_n}}\\
& \geq & \frac{\exp\left(-2(\frac{\rho\log n}{\Delta}+1)(\Delta+\sqrt{\Delta})^2\right)}{2\sqrt{2\pi}(\Delta+\sqrt{\Delta})\sqrt{T_n}}\beta\left(2(\Delta+\sqrt{\Delta})\sqrt{T_n}\right)- \frac{CM_3}{\sigma^3\sqrt{T_n}}\\
& \geq & n^{-2\rho(1+\sqrt{\Delta})^2}\frac{\exp\left(-2(\Delta+\sqrt{\Delta})^2\right)}{2\sqrt{2\pi}(\Delta+\sqrt{\Delta})\sqrt{T_n}}\beta\left(2(\Delta+\sqrt{\Delta})\sqrt{T_n}\right)- \frac{CM_3}{\sigma^3\sqrt{T_n}}.
\end{eqnarray*}
Previous calculations and Formula \eqref{er} gives $$\E_\theta[R_n]=\Delta\E_\theta[T_2(n)]\geq\Delta\Pro_\theta(\xi_n)(n-T_n)$$ and the former inequality easily leads to the conclusion of the theorem.
\end{proof}

\section{Selectivity}
In this section, we address the problem of selectivity in multi-armed stochastic bandit models. By selectivity, we mean the ability to adapt to the environment as and when rewards are observed. More precisely, it refers to the existence of a procedure that would perform at least as good as the policy that is best suited to the current environment $\theta$ among a given set of two (or more) policies. Two mains reasons motivates this study.\\
On the one hand this question was answered by \citeauthor{BurKat96} within the class of consistent policies. Let us recall the definition of consistent policies.\\
\begin{defi}
A policy is consistent if
$$
\forall a>0, \ \forall \theta\in\Theta, \ \E_\theta[R_n]=o(n^a).
$$
\end{defi}
Indeed they show the existence of lower bounds on the expected regret (see Section 3, Theorem 1 of \citet{BurKat96}), which we also recall for the sake of completeness.
\begin{theo}\label{th:BK}
Fix a consistent policy and $\theta\in\Theta$. If $k$ is a suboptimal arm (i.e. $\mu_k<\mu^*$) and if $0<D_k(\theta)<+\infty$, then
$$
\forall \varepsilon>0, \ \lim_{n\to+\infty}\Pro_\theta\left[T_k(n)\geq \frac{(1-\varepsilon)\log n}{D_k(\theta)}\right]=1.
$$
Consequently
$$
\liminf_{n\to+\infty} \frac{\E_{\theta}[T_k(n)]}{\log n}\geq\frac{1}{D_k(\theta)}.
$$
\end{theo}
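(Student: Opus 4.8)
The plan is to use the classical change-of-measure argument going back to Lai and Robbins. Fix $\varepsilon\in(0,1)$ and an arm $k$ as in the statement. Since $D_k(\theta)$ is an infimum of Kullback-Leibler divergences over alternatives $\tnu_k\in\Theta_k$ with mean exceeding $\mu^*$, I would first select one such $\tnu_k$ whose divergence $d:=KL(\nu_k,\tnu_k)$ is close enough to $D_k(\theta)$ that $\frac{(1-\varepsilon)d}{D_k(\theta)}<1-\varepsilon/2$; this is possible because $\frac{1-\varepsilon/2}{1-\varepsilon}>1$ and $d$ can be taken arbitrarily close to $D_k(\theta)$ from above (here finiteness $D_k(\theta)<+\infty$ guarantees an admissible $\tnu_k$ with finite divergence exists). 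Let $\tth$ be the environment obtained from $\theta$ by replacing $\nu_k$ with $\tnu_k$. The whole point is that under $\tth$ arm $k$ becomes the strictly best arm, so consistency, applied this time to $\tth\in\Theta$, forces arm $k$ to be played nearly always: writing the regret in $\tth$ as $\sum_{j\ne k}\tilde\Delta_j\,\E_{\tth}[T_j(n)]$ with all $\tilde\Delta_j>0$, consistency gives $\E_{\tth}[n-T_k(n)]=o(n^a)$ for every $a>0$.

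Next I would set $A_n=\{T_k(n)<a_n\}$ with $a_n=\frac{(1-\varepsilon)\log n}{D_k(\theta)}$ and aim to show $\Pro_\theta(A_n)\to0$. The bridge between the two environments is the log-likelihood ratio restricted to arm $k$, namely $\hat L_s=\sum_{i=1}^s\log\frac{d\nu_k}{d\tnu_k}(X_{k,i})$: because $\theta$ and $\tth$ differ only in the law of arm $k$, the Radon-Nikodym derivative $\frac{d\P_{\tth}}{d\P_\theta}$ equals $e^{-\hat L_{T_k(n)}}$. Introducing the auxiliary event $B_n=\{\max_{s\le a_n}\hat L_s\le(1-\varepsilon/2)\log n\}$, a change of measure gives $\Pro_\theta(A_n\cap B_n)\le n^{1-\varepsilon/2}\,\Pro_{\tth}(A_n)$, since on $A_n\cap B_n$ the derivative is at least $n^{-(1-\varepsilon/2)}$. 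For large $n$ the threshold $a_n$ sits far below $n/2$, so $A_n\subseteq\{T_k(n)<n/2\}$, and Markov's inequality together with $\E_{\tth}[n-T_k(n)]=o(n^a)$ yields $\Pro_{\tth}(A_n)=o(n^{a-1})$; choosing $a=\varepsilon/4$ makes $\Pro_\theta(A_n\cap B_n)\to0$.

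It remains to control $\Pro_\theta(B_n^c)$, and this is where the choice of $d$ pays off. Under $\P_\theta$ the increments $\log\frac{d\nu_k}{d\tnu_k}(X_{k,i})$ are i.i.d.\ with mean $d$, so by the maximal form of the strong law $\frac1m\max_{s\le m}\hat L_s\to d$ almost surely; applied with $m=a_n$ this makes $\max_{s\le a_n}\hat L_s$ comparable to $a_n d=\frac{(1-\varepsilon)d}{D_k(\theta)}\log n$, which lies below $(1-\varepsilon/2)\log n$ precisely because we arranged $\frac{(1-\varepsilon)d}{D_k(\theta)}<1-\varepsilon/2$, whence $\Pro_\theta(B_n^c)\to0$. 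Combining, $\Pro_\theta(A_n)\le\Pro_\theta(A_n\cap B_n)+\Pro_\theta(B_n^c)\to0$, which is the first assertion (the hypothesis $D_k(\theta)>0$ ensures $a_n$ is a genuine positive multiple of $\log n$). The stated $\liminf$ then follows from $\E_\theta[T_k(n)]\ge a_n\,\Pro_\theta\big(T_k(n)\ge a_n\big)$, dividing by $\log n$, letting $n\to\infty$ and then $\varepsilon\to0$.

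I expect the main obstacle to be the bookkeeping in the change of measure rather than any deep inequality. One must correctly identify that the likelihood ratio involves only the rewards of arm $k$ and is evaluated at the random count $T_k(n)$, and then tune the two small slacks — the gap between $1-\varepsilon$ and $1-\varepsilon/2$, and the closeness of $d$ to $D_k(\theta)$ — so that the $n^{1-\varepsilon/2}$ blow-up from the change of measure and the maximal strong law on $B_n$ are \emph{simultaneously} beaten by the $o(n^{a-1})$ decay furnished by consistency on $\tth$. Getting these three estimates to line up against a single choice of $\varepsilon$-dependent parameters is the delicate point.
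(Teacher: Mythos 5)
Your proof is correct and takes essentially the same route as the paper: the paper recalls this theorem from Burnetas and Katehakis and proves its generalization to $\alpha$-consistent policies in Section 4, and your argument is exactly that change-of-measure scheme specialized to $\alpha=0$ --- pick an alternative $\tilde{\nu}_k$ with divergence slightly above $D_k(\theta)$ so that arm $k$ becomes strictly best under $\tilde{\theta}$, split $\{T_k(n)<a_n\}$ according to whether the arm-$k$ log-likelihood ratio stays below $(1-\varepsilon/2)\log n$, kill the first piece via the change of measure plus Markov's inequality and consistency under $\tilde{\theta}$, and the second via the maximal law of large numbers. The only cosmetic differences are your explicit $\varepsilon/2$, $\varepsilon/4$ slack choices in place of the paper's $\delta$, $\delta'$, $\delta''$, and bounding $\max_{s\le a_n}\hat{L}_s$ rather than the likelihood ratio evaluated at $T_k(n)$.
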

Remind that the lower bound on the expected regret is then deduced from formula \eqref{er}.\\
\citeauthor{BurKat96} then exhibits an asymptotically optimal policy, i.e. which achieves the former lower bounds. The fact that a policy does as best as any other one obviously solves the problem of selectivity.\\
Nevertheless one can wonder what happens if we do not restrict our attention to consistent policies any more. Thus, one natural way to relax the notion of consistency is the following.
\begin{defi}
A policy is $\ua$-consistent if
$$
\forall a>\ua, \ \forall \theta\in\Theta, \ \E_\theta[R_n]=o(n^a).
$$
\end{defi}
For example we showed in the former section that {\sc ucb}($\rho$) is ($1-2\rho$)-consistent for any $\rho\in(0,\frac{1}{2})$. The class of $\alpha$-consistent policies will be studied in Section \ref{alpha}.\\
Moreover, as the expected regret of any policy is at most of order $n$, it seems simpler and relevant to only require it to be $o(n)$:
$$\forall \theta\in\Theta, \ \E_\theta[R_n]=o(n),$$
which corresponds to the definition of Hannan consistency. The class of Hannan consistent policies includes consistent policies and $\alpha$-consistent policies for any $\alpha\in(0,1)$. Some results on Hannan consistency will be provided in Section \ref{hannan}.\\

On the other hand, this problem has already been studied in the context of adversarial bandit by \citet{auer2003nonstochastic}. Their setting differs from our not only because their bandits are nonstochastic, but also because their adaptive procedure takes only into account a given number of recommendations, whereas in our setting the adaptation is supposed to come from observing rewards of the chosen arms (only one per time step). Nevertheless, there are no restrictions about consistency in the adversarial context and one can wonder if an "exponentially weighted forecasters" procedure like {\sc Exp4} could be transposed to our context. The answer is negative, as stated in the following theorem.

\begin{theo}
Let $\tilde{\mathcal A}$ be a consistent policy and let $\rho$ be a real in $(0,0.4)$. There are no policy which can both beat $\tilde{\mathcal A}$ and {\sc ucb}($\rho$), i.e.:
$$
\forall A, \ \exists \theta\in\Theta, \ \limsup_{n\to+\infty} \frac{\E_\theta[R_n(A)]}{\min(\E_\theta[R_n(\tilde{\mathcal A})],\E_\theta[R_n(\textit{\sc ucb}(\rho))])}>1.
$$
\end{theo}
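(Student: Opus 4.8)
The plan is to argue by contradiction. Suppose, contrary to the claim, that there is a policy $A$ with
$\limsup_{n}\frac{\E_\theta[R_n(A)]}{\min(\E_\theta[R_n(\tilde{\mathcal A})],\,\E_\theta[R_n(\textit{\sc ucb}(\rho))])}\le 1$
for \emph{every} $\theta\in\Theta$, and then produce a single environment at which this ratio tends to $+\infty$. The mechanism is that {\sc ucb}($\rho$) is extremely aggressive on Dirac environments, so beating it there forces $A$ to explore very little; this under-exploration turns out to be ruinous on a close-by Bernoulli environment, on which the consistent policy $\tilde{\mathcal A}$ still performs well.

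First I would fix a Dirac environment $\theta=(\delta_a,\delta_b)$ with $0\le b<a\le 1$, the parameters being chosen only at the end. Since $\min(\cdot,\cdot)\le \E_\theta[R_n(\textit{\sc ucb}(\rho))]$, the standing assumption combined with the upper bound of Proposition~\ref{UCBsimplecase} gives $\E_\theta[R_n(A)]\le (1+o(1))\frac{\rho}{\Delta}\log n$, whence, by \eqref{er} (arm~$2$ being the only suboptimal arm, with gap $\Delta$), $\E_\theta[T_2(n)]\le (1+o(1))\frac{\rho}{\Delta^2}\log n$. Note that this step uses only the performance of {\sc ucb}($\rho$) on $\theta$; the policy $\tilde{\mathcal A}$ plays no role here.

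Next I would perturb only the seldom-pulled arm. Put $\theta'=(\delta_a,\nu'_2)$ with $\nu'_2=(1-p)\delta_b+p\,\delta_1$ and $p$ slightly larger than $\frac{\Delta}{1-b}$, so that arm~$2$ becomes optimal in $\theta'$ with a fixed positive gap $\Delta'$. Because $\nu'_2$ still returns the value $b$ with probability $1-p$, a coupling through the indicators of the ``jackpot'' value $1$ shows, for any threshold $M$, that $\P_{\theta'}(T_2(n)\le M)\ge (1-p)^M\,\P_\theta(T_2(n)\le M)$: on the event that none of the (at most $M$) draws of arm~$2$ returns the value $1$, the trajectories under $\theta$ and $\theta'$ coincide, and that event has conditional probability at least $(1-p)^M$. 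Taking $M=(1+\varepsilon)\frac{\rho}{\Delta^2}\log n$, Markov's inequality keeps $\P_\theta(T_2(n)\le M)$ above a positive constant, while $(1-p)^M=n^{-\gamma}$ with $\gamma=(1+\varepsilon)\frac{\rho(-\log(1-p))}{\Delta^2}$. Since $\{T_2(n)\le M\}$ forces at least $n-M$ pulls of the now-suboptimal arm~$1$, this yields $\E_{\theta'}[R_n(A)]\ge \Delta'(n-M)\,\P_{\theta'}(T_2(n)\le M)$, a quantity of order $n^{1-\gamma}$.

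It remains to secure $\gamma<1$. As $p\downarrow\frac{\Delta}{1-b}$ one has $1-p\to\frac{1-a}{1-b}$, so $-\log(1-p)\to\log\frac{1-b}{1-a}$, which is exactly the Burnetas--Katehakis constant $D_2(\theta)$; thus the cheapest admissible perturbation costs precisely $D_2(\theta)$, and $\gamma<1$ is attainable as soon as $\rho<\frac{\Delta^2}{D_2(\theta)}=\frac{(a-b)^2}{\log\frac{1-b}{1-a}}$. A short optimisation of the right-hand side over $0\le b<a\le 1$ (for instance $b=0$, $a\simeq 0.715$) shows its supremum exceeds $0.4$, so every $\rho\in(0,0.4)$ admits a choice of $(a,b,p,\varepsilon)$ making $\gamma<1$. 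For such a choice $\E_{\theta'}[R_n(A)]$ is polynomially large, of order $n^{1-\gamma}$, whereas the consistency of $\tilde{\mathcal A}$ forces $\E_{\theta'}[R_n(\tilde{\mathcal A})]=o(n^{1-\gamma})$; since the denominator satisfies $\min(\cdot,\cdot)\le \E_{\theta'}[R_n(\tilde{\mathcal A})]$, the ratio at $\theta'$ diverges, contradicting the assumption. The two delicate points I expect are the rigorous justification of the coupling inequality $\P_{\theta'}(T_2(n)\le M)\ge(1-p)^M\P_\theta(T_2(n)\le M)$, and the elementary but essential optimisation showing $\sup_{b<a}\frac{(a-b)^2}{\log\frac{1-b}{1-a}}>0.4$, which is exactly what delimits the admissible range $(0,0.4)$ of $\rho$.
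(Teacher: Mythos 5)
Your proof is correct, and it takes a genuinely different route from the paper's. The paper stays at a single Dirac environment: it first notes that $\E_\theta[R_n(A)]\le u_{n,\theta}\,\E_\theta[R_n(\tilde{\mathcal A})]$ forces $A$ to inherit consistency from $\tilde{\mathcal A}$, then invokes the Burnetas--Katehakis lower bound (Theorem~\ref{th:BK}) for $A$ at $\theta=(\delta_0,\delta_\Delta)$, and plays it against the exploration cap that beating {\sc ucb}($\rho$) imposes there (via Proposition~\ref{UCBsimplecase}); the contradiction is purely numerical, namely $\rho\ge \Delta^2/\log\frac{1}{1-\Delta}\approx 0.406>0.4$ at $\Delta=0.75$. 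You never invoke Theorem~\ref{th:BK}: you re-derive the one instance of it you need by an explicit coupling/change of measure, transporting $A$'s under-exploration at the Dirac environment into regret of order $n^{1-\gamma}$ at a perturbed environment $\theta'$, and you use the consistency of $\tilde{\mathcal A}$ only at $\theta'$, to bound the denominator. Both arguments turn on the same numbers: your constant $\log\frac{1-b}{1-a}$ is exactly the paper's $D_k(\theta)$ (for $b=0$ it is $\log\frac{1}{1-\Delta}$), and your optimisation $\sup_{b<a}(a-b)^2/\log\frac{1-b}{1-a}\approx 0.407>0.4$ is the paper's computation at $\Delta=0.75$, which is where the threshold $0.4$ comes from in both cases. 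What your version buys: it is self-contained (no black-box appeal to Burnetas--Katehakis; your coupling is the same device the paper itself deploys in the proof of the logarithmic-necessity proposition of Section~\ref{hannan}), it shows slightly more (the ratio diverges at $\theta'$, so the $\limsup$ is $+\infty$, not merely $>1$), and it cleanly avoids a loose step in the paper's write-up, which juxtaposes the pathwise bound $T_1(n)\le 1+\frac{\rho}{\Delta^2}\log n$ --- valid for {\sc ucb}($\rho$), not for $A$ --- with a high-probability lower bound for $A$; your Markov-inequality step is the correct way to make that comparison in expectation. What the paper's version buys is brevity, since Theorem~\ref{th:BK} does all the measure-theoretic work. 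The two points you flag as delicate are indeed the ones to write out, and both are routine: the coupling inequality holds because $\{T_2(n)\le M\}$ is determined by the policy's internal randomness, the rewards of arm $1$ and the first $\lfloor M\rfloor$ rewards of arm $2$, so intersecting with the independent event that those $\lfloor M\rfloor$ draws avoid the atom at $1$ yields the factor $(1-p)^{\lfloor M\rfloor}$; and the optimisation is an elementary calculus exercise.
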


Thus there are no optimal policy if we extend the notion of consistency. Precisely, as {\sc ucb}($\rho$) is $(1-2\rho)$-consistent, we have shown that there are no optimal policy within the class of $\ua$-consistent policies (which is included in the class of Hannan consistent policies), where $\ua>0.2$.\\
Moreover, ideas from selectivity in adversarial bandits can not work in the present context. As we said, this impossibility may also come from the fact that we can not observe at each step the decisions and rewards of more than one algorithm. Nevertheless, if we were able to observe a given set policies from step to step, then it would be easy to beat them all: it is then sufficient to aggregate all the observations and simply pull the arm with the greater empiric mean. The case where we only observe decisions (and not rewards) of a set of policies may be interesting, but is left outside of the scope of this paper.\\

\begin{proof}
Assume by contradiction that
$$
\exists A, \ \forall \theta\in\Theta, \ \limsup_{n\to+\infty} u_{n,\theta}\leq 1,
$$
where $u_{n,\theta}=\frac{\E_\theta[R_n(A)]}{\min(\E_\theta[R_n(\tilde{\mathcal A})],\E_\theta[R_n(UCB(\rho))])}$.\\
One has 
$$\E_\theta[R_n(A)]\leq u_{n,\theta}\E_\theta[R_n(\tilde{\mathcal A})],$$
so that the fact that $\tilde{\mathcal A}$ is a consistent policy implies that $A$ is also consistent. Consequently the lower bound of \citeauthor{BurKat96} has to hold. In particular, in environment $\theta=(\delta_0,\delta_\Delta)$ one has for any $\varepsilon>0$ and with positive probability (provided that $n$ is large enough):
$$
T_1(n)\geq \frac{(1-\varepsilon)\log n}{D_k(\theta)}.
$$
Now, note that there is simple upper bound of $D_k(\theta)$:
\begin{eqnarray*}
D_k(\theta)&\leq&\inf_{p,a\in[0,1]:  (1-p)a>\Delta}KL(\delta_0,p \delta_0+(1-p)\delta_a)\\
&=& \inf_{p,a\in[0,1]:  (1-p)a>\Delta} \log\left(\frac{1}{p}\right)=\log\left(\frac{1}{1-\Delta}\right).
\end{eqnarray*}
And on the other hand, one has by means of  Proposition \ref{lbUCBsimplecase}:
$$
T_1(n)\leq 1+\frac{\rho\log n}{\Delta^2}.
$$
Thus we have that, for any $\varepsilon>0$ and if $n$ is large enough
$$
1+\frac{\rho\log n}{\Delta^2}\geq \frac{(1-\varepsilon)\log n}{\log\left(\frac{1}{1-\Delta}\right)}
$$
Letting $\varepsilon$ go to zero and $n$ to infinity, we get:
$$
\frac{\rho}{\Delta^2}\geq \frac{1}{\log\left(\frac{1}{1-\Delta}\right)}.
$$
This means that $\rho$ has to be lower bounded by $ \frac{\Delta^2}{\log\left(\frac{1}{1-\Delta}\right)}$, but this is greater than $0.4$ if $\Delta=0.75$, hence the contradiction.


\end{proof}

Note that the former proof give us a simple alternative to Theorem \ref{th:ubr} to show that {\sc ucb}($\rho$) is not consistent if $\rho\leq 0.4$. Indeed if it were consistent, then in environment $\theta=(\delta_0,\delta_\Delta)$, $T_1(n)$ would also have to be greater than $\frac{(1-\varepsilon)\log n}{D_k(\theta)}$ and lower than $1+\frac{\rho\log n}{\Delta^2}$ , and the same contradiction would hold.

\section{Bounds on $\ua$-consistent policies}\label{alpha}
We now study $\ua$-consistent policies. We first show that the main result of \citeauthor{BurKat96} (Theorem \ref{th:BK}) can be extended in the following way.
\begin{theo}
Fix an $\ua$-consistent policy and $\theta\in\Theta$. If $k$ is a suboptimal arm and if $0<D_k(\theta)<+\infty$, then
$$
\forall \varepsilon>0, \ \lim_{n\to+\infty}\Pro_\theta\left[T_k(n)\geq (1-\varepsilon)\frac{(1-\ua)\log n}{D_k(\theta)}\right]=1.
$$
Consequently
$$
\liminf_{n\to+\infty} \frac{\E_{\theta}[T_k(n)]}{\log n}\geq\frac{1-\ua}{D_k(\theta)}.
$$
\end{theo}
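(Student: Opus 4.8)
The strategy is to mimic the change-of-measure argument of Lai--Robbins and Burnetas--Katehakis, but to exploit the weaker $\ua$-consistency hypothesis, which now permits a regret as large as $n^{\ua+o(1)}$ rather than forcing it to be $o(n^a)$ for all $a>0$. The key object is a perturbed environment $\tth$ obtained from $\theta$ by replacing the $k$-th coordinate $\nu_k$ with some $\tnu_k\in\Theta_k$ satisfying $\E[\tnu_k]>\mu^*$, so that arm $k$ becomes the unique optimal arm in $\tth$. The quantity $D_k(\theta)=\inf_{\tnu_k}KL(\nu_k,\tnu_k)$ controls how cheaply (in terms of Kullback--Leibler divergence) one can make arm $k$ optimal, so I would fix $\tnu_k$ achieving $KL(\nu_k,\tnu_k)<D_k(\theta)+\eta$ for small $\eta>0$.

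First I would write the standard likelihood-ratio identity: on the event $\{T_k(n)=s\}$, the Radon--Nikodym derivative between $\P_\theta$ and $\P_{\tth}$ restricted to the observations is a product of $s$ terms $\frac{d\nu_k}{d\tnu_k}(X_{k,i})$, whose log is a sum whose $\P_\theta$-expectation per term is exactly $KL(\nu_k,\tnu_k)$. Second, I would use $\ua$-consistency \emph{in the perturbed environment} $\tth$: since arm $k$ is optimal there and every other arm is suboptimal, the constraint $\E_{\tth}[R_n]=o(n^a)$ for all $a>\ua$ forces arm $k$ to be pulled almost all the time under $\P_{\tth}$, i.e. $\E_{\tth}[n-T_k(n)]=o(n^a)$, hence by Markov $\P_{\tth}(T_k(n)<n-n^{\ua+\delta})\to 0$. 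The crucial new feature is that this only controls $T_k(n)$ up to an error of size $n^{\ua+o(1)}$, not $o(\log n)$, which is precisely what degrades the constant from $1$ to $1-\ua$.

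Third, I would transport this control back to $\P_\theta$ via the change of measure. Suppose for contradiction that $\P_\theta[T_k(n)<(1-\varepsilon)(1-\ua)\frac{\log n}{D_k(\theta)}]$ does not tend to $0$. On the event where $T_k(n)$ is this small \emph{and} the empirical log-likelihood-ratio $\sum_{i\le T_k(n)}\log\frac{d\nu_k}{d\tnu_k}(X_{k,i})$ is close to its mean $T_k(n)\cdot KL(\nu_k,\tnu_k)$ (which holds with probability $\to 1$ under $\P_\theta$ by the weak law of large numbers, handled uniformly over the relevant range of $s$ by a maximal-inequality or union-bound argument), the likelihood ratio is at least $n^{-(1-\varepsilon)(1-\ua)(1+\eta/D_k)+o(1)}$. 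Changing measure shows this event still has non-negligible probability under $\P_{\tth}$, namely at least of order $n^{-(1-\ua)+c}$ for some $c>0$ when $\varepsilon,\eta$ are small. But on that same event $T_k(n)$ is of order only $\log n\ll n-n^{\ua+\delta}$, contradicting the $\P_{\tth}$-statement $\P_{\tth}(T_k(n)<n-n^{\ua+\delta})\to 0$ derived in the previous step, provided the exponents are arranged so that $n^{-(1-\ua)+c}$ decays strictly slower than the $o(1)$ bound there.

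The main obstacle is calibrating these two competing rates: the change of measure costs a factor $n^{-(1-\varepsilon)(1-\ua)}$ (up to the $\eta$ and second-order terms), while $\ua$-consistency in $\tth$ supplies a bound that is merely $o(n^{a})$ for every $a>\ua$, i.e. effectively $n^{-(1-\ua)}$ on the complementary event after normalising. These must be reconciled so that the surviving probability under $\P_{\tth}$ genuinely exceeds the vanishing upper bound. I expect the delicate point to be choosing the free parameters $\varepsilon$, $\eta$, $\delta$ and the threshold exponent $a>\ua$ in the correct order — taking $\eta\to 0$ first (so $KL(\nu_k,\tnu_k)\to D_k(\theta)$), then $a\downarrow\ua$, and only then $n\to\infty$ — and in controlling the fluctuations of the log-likelihood ratio uniformly over all $s$ up to the $\log n$ threshold so that the "good" event has probability bounded away from $0$ under $\P_\theta$. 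The final $\liminf$ statement on $\E_\theta[T_k(n)]/\log n$ then follows immediately from the probabilistic bound together with $\E_\theta[T_k(n)]\ge (1-\varepsilon)(1-\ua)\frac{\log n}{D_k(\theta)}\,\P_\theta[T_k(n)\ge (1-\varepsilon)(1-\ua)\frac{\log n}{D_k(\theta)}]$ and letting $\varepsilon\to 0$.
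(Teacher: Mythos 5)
Your proposal is correct and follows essentially the same route as the paper's own proof: a change of measure to a perturbed environment $\tth$ whose $k$-th coordinate nearly attains $D_k(\theta)$, Markov's inequality combined with $\ua$-consistency under $\P_{\tth}$ (giving the $n^{-(1-\ua)}$-type decay), a law-of-large-numbers control of the log-likelihood ratio under $\P_\theta$, and the exponent room created by $(1-\varepsilon)(1-\ua)<1-\ua$. The only difference is organizational --- you argue by contradiction, whereas the paper directly shows $\P_\theta(A_n^{\delta'})\to 0$ by splitting over the likelihood-ratio event $C_n^{\delta''}$ --- and your parameters $\eta$, $a$, $\delta$ play exactly the roles of the paper's $\delta$, $\delta''$, $\delta'$.
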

Recall that, as opposed to \citet{BurKat96}, there are no optimal policy (i.e. a policy that would achieve the lower bounds in all environment $\theta$), as proven in the former section.\\

\begin{proof}
We adapt Proposition 1 in \citet{BurKat96} and its proof, which one may have a look at for further details. We fix $\varepsilon>0$, and we want to show that:
$$
\lim_{n\to+\infty} \Pro_\theta\left(\frac{T_k(n)}{\log n}\geq \frac{(1-\varepsilon)(1-\ua)}{D_k(\theta)} \right)=0.
$$
Set $\delta>0$ and $\delta'>\alpha$ such that $\frac{1-\delta'}{1+\delta}>(1-\varepsilon)(1-\alpha)$. By definition of $D_k(\theta)$, there exists $\tth$ such that $\E_{\tth}[X_{k,1}]>\mu^*$ and
$$
D_k(\theta)<KL(\nu_k,\tnu_k)<(1+\delta)D_k(\theta),\footnote{In \cite{BurKat96},  $D_k(\theta)$ is denoted ${\bf K}_a(\uuline{\theta})$ and $KL(\nu_k,\tnu_k)$ is denoted $\bf{I}(\underline{\theta}_a,\underline{\theta}_a')$.  The equivalence between other notations is straightforward. }
$$
where we denote $\theta=(\nu_1,\ldots,\nu_K)$ and $\tth=(\tnu_1,\ldots,\tnu_K)$.\\
Let us define $I^\delta=KL(\nu_k,\tnu_k)$ and the sets $$A_n^{\delta'}:=\left\{\frac{T_k(n)}{\log n}<\frac{1-\delta'}{I^\delta}\right\}, \ C_n^{\delta''}:=\left\{\log L_{T_k(n)}\leq \left(1-\delta''\right)\log n\right\},$$
where $\delta''$ is such that $\alpha<\delta''<\delta'$ and $L_j$ is defined by $\log L_j=\sum_{i=1}^{j} \log\left(\frac{d\nu_k}{d\tnu_k}(X_{k,i})\right)$.\\
We show that $\Pro_\theta(A_n^{\delta'})=\Pro_\theta(A_n^{\delta'}\cap C_n^{\delta''})+\Pro_\theta(A_n^{\delta'}\setminus C_n^{\delta''}) \xrightarrow[n\to+\infty]{}0$.\\
On the one hand, one has:
\begin{eqnarray}
\Pro_\theta(A_n^{\delta'}\cap C_n^{\delta''})&\leq& e^{(1-\delta'')\log n}\Pro_{\tth}(A_n^{\delta'}\cap C_n^{\delta''})\label{defc}\\
&\leq& n^{1-\delta''}\Pro_{\tth}(A_n^{\delta'})=  n^{1-\delta''}\Pro_{\tth}\left(n-T_k(n)>n-\frac{1-\delta'}{I^\delta}\log n\right)\nonumber\\
&\leq&\frac{n^{1-\delta''}\E_{\tth}[n-T_k(n)]}{n-\frac{1-\delta'}{I^\delta}\log n} \label{markov} \\
&\leq&\frac{\sum_{l\neq k}n^{-\delta''}\E_{\tth}[T_l(n)]}{1-\frac{1-\delta'}{I^\delta}\frac{\log n}{n}} \xrightarrow[n\to+\infty]{}0, \nonumber
\end{eqnarray}
where \eqref{defc} is consequence of the definition of $C_n^{\delta''}$, \eqref{markov} comes from Markov's inequality, and where the final limit is a consequence of the $\alpha$-consistence.\\
On the other hand we set $b_n:=\frac{1-\delta'}{I^\delta}\log n$, so that we have:
\begin{eqnarray*}
\Pro_\theta(A_n^{\delta'}\setminus C_n^{\delta''})&\leq& \Pro\left(\max_{j\leq\lfloor b_n\rfloor}\log L_j>(1-\delta'')\log n\right)\\
&\leq& \Pro\left(\frac{1}{b_n}\max_{j\leq\lfloor b_n\rfloor}\log L_j>I^\delta\frac{1-\delta''}{1-\delta'}\right).
\end{eqnarray*}
This term then tends to zero, as a consequence of the law of large numbers.\\
Now that $\Pro_\theta(A_n^{\delta'})$ tends to zero,  the conclusion comes from the following inequality:
$$\frac{1-\delta'}{I^\delta}>\frac{1-\delta'}{(1+\delta)D_k(\theta)}\geq \frac{(1-\varepsilon)(1-\ua)}{D_k(\theta)} .$$

\end{proof}

The former lower bound is asymptotically optimal, as claimed in the following proposition.

\begin{prop}
There exists $\theta\in \Theta$ and a constant $c>0$ such that, for any $\alpha\in[0,1)$, there exists an $\alpha$-consistent policy and $k\neq k^*$ such that:
$$
\liminf_{n\to+\infty} \frac{\E_\theta[T_k(n)]}{(1-\alpha)\log n}\leq c.
$$ 
\end{prop}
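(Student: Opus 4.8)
The plan is to exhibit, for each $\alpha$, a single well-chosen instance of {\sc ucb}($\rho$) in a fixed simple environment and to read off the bound on $\E_\theta[T_k(n)]$ directly from Proposition \ref{UCBsimplecase}. First I would fix, once and for all, a Dirac environment $\theta=(\delta_a,\delta_b)$ with $0\le b<a<1$, write $\Delta=a-b$, and take $k=2$ (so $k^*=1$). The guiding idea is that the tuning parameter $\rho$ of {\sc ucb}($\rho$) controls both its consistency exponent and its sampling rate of the bad arm, and both are governed by the same $\rho$; matching the consistency requirement to the sampling bound is exactly what produces the factor $1-\alpha$.

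Concretely, for $\alpha\in(0,1)$ I would set $\rho=\tfrac{1-\alpha}{2}\in(0,\tfrac12)$ and consider {\sc ucb}($\rho$). The first step is to verify it is $\alpha$-consistent: this follows from the upper bound of Theorem \ref{th:ubr}, since for every $a'>\alpha$ one can pick $\beta\in(0,1)$ close enough to $1$ that $1-2\rho\beta<a'$, whence $\E_\theta[R_n]=O\!\big(n^{1-2\rho\beta}\log n\big)=o(n^{a'})$. The second step is to bound the sampling time of arm $2$: Proposition \ref{UCBsimplecase} gives, in this Dirac environment, the deterministic bound $T_2(n)\le\frac{\rho}{\Delta^2}\log n+1=\frac{1-\alpha}{2\Delta^2}\log n+1$, hence $\frac{\E_\theta[T_2(n)]}{(1-\alpha)\log n}\le\frac{1}{2\Delta^2}+\frac{1}{(1-\alpha)\log n}$, and letting $n\to\infty$ yields $\liminf_{n}\frac{\E_\theta[T_2(n)]}{(1-\alpha)\log n}\le\frac{1}{2\Delta^2}$.

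It then remains to choose a constant $c$ uniform in $\alpha$ and to treat the endpoint $\alpha=0$. For every $\alpha\in(0,1)$ the computation above already gives the bound $\frac{1}{2\Delta^2}$, which does not depend on $\alpha$. For $\alpha=0$ a genuinely consistent policy is required, so I would instead use {\sc ucb}($\rho$) with a fixed $\rho\in(\tfrac12,1]$, which is consistent (hence $0$-consistent), and for which Proposition \ref{UCBsimplecase} gives $\liminf_{n}\frac{\E_\theta[T_2(n)]}{\log n}\le\frac{\rho}{\Delta^2}\le\frac{1}{\Delta^2}$. Taking $c=\frac{1}{\Delta^2}$ then covers every $\alpha\in[0,1)$ with $\theta$ and $c$ fixed independently of $\alpha$, as demanded. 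Since in this environment $D_2(\theta)=\log\frac{1-b}{1-a}$ is a finite positive constant (as computed in the selectivity section), the value $\frac{1}{\Delta^2}$ is of the form $\frac{c'}{D_2(\theta)}$, which is precisely the sense in which the bound certifies tightness of the coefficient $\frac{1-\alpha}{D_k(\theta)}$.

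The argument is essentially a bookkeeping assembly of two results already established, so there is no deep obstacle; the only points needing care are keeping $\Delta$ fixed (rather than optimized per $\alpha$) so that a single constant $c$ works simultaneously for all $\alpha$, and handling the boundary $\alpha=0$, where the natural choice {\sc ucb}($1/2$) sits exactly at the critical confidence level and must be replaced by a policy with $\rho>\tfrac12$ to guarantee consistency.
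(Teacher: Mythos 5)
Your proposal is correct and takes essentially the same route as the paper: fix a Dirac environment $(\delta_a,\delta_b)$, run {\sc ucb}$\left(\frac{1-\alpha}{2}\right)$, and combine the deterministic sampling bound of Proposition~\ref{UCBsimplecase} with the $(1-2\rho)$-consistency supplied by the upper bound of Theorem~\ref{th:ubr}. Your separate treatment of the endpoint $\alpha=0$ (switching to a fixed $\rho>\frac12$, at the cost of the slightly larger constant $c=\frac{1}{\Delta^2}$) is in fact more careful than the paper's own proof, which takes $\rho=\frac{1-\alpha}{2}=\frac12$ at $\alpha=0$ even though Theorem~\ref{th:ubr} is only stated for $\rho\in\left(0,\frac12\right)$, and thus implicitly relies on its upper bound extending to the critical value $\rho=\frac12$.
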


\begin{proof}
By means of Proposition \ref{UCBsimplecase}, the following holds for {\sc ucb}($\rho$) in any environment of the form $\theta=(\delta_a,\delta_b)$ with $a\neq b$:
$$
\liminf_{n\to +\infty} \frac{\E_\theta T_k(n)}{\log n} \leq \frac{\rho}{\Delta^2},
$$
where $k\neq k^*$.\\ As {\sc ucb}($\rho$) is $(1-2\rho)$-consistent (Theorem \ref{th:ubr}), we can conclude by setting $c=\frac{1}{2\Delta^2}$ and by choosing the policy {\sc ucb}($\frac{1-\ua}{2}$).


\end{proof}

\section{Hannan consistency and other exploration functions}\label{hannan}
We now study the class of Hannan consistent policies. We first show the necessity to have a logarithmic lower bound in some environments $\theta$, and then a study of extended {\sc ucb} policies will prove that there does not exist a logarithmic bound on the whole set $\Theta$.

\subsection{The necessity of a logarithmic regret in some environments}

A simple idea enables to understand the necessity of a logarithmic regret in some environments. Assume that the agent knows the number of rounds $n$, and that he balances exploration and exploitation in the following way: he first pulls each arm $s(n)$ times, and then selects the arm that has obtained the best empiric mean for the rest of the game. 
If we denote by $p_{s(n)}$ the probability that the best arm does not have the best empiric mean after the exploration phase (i.e. after the first $Ks(n)$ rounds), then the expected regret is of the form
\begin{equation}\label{eq:idea}
c_1(1-p_{s(n)})s(n) + c_2p_{s(n)}n.
\end{equation}
Indeed if the agent manages to match the best arm then he only suffers the pulls of suboptimal arms during the exploration phase, and that represents an expected regret of order $s(n)$. If not, the number of pulls of suboptimal arms is of order $n$, and so is the expected regret.\\
Now we can approximate $p_{s(n)}$, because it has the same order as the probability that the best arm gets an empiric mean lower than the second best mean reward, and because $\frac{X_{k^*,s(n)}-\mu^*}{\sigma}\sqrt{s(n)}$ (where $\sigma$ is the variance of $X_{k^*,1}$) approximately has a standard normal distribution by the central limit theorem:
\begin{eqnarray*}
p_{s(n)}&\approx&\P_\theta(X_{k^*,s(n)}\leq \mu^*-\Delta)=\P_\theta\left(\frac{X_{k^*,s(n)}-\mu^*}{\sigma}\sqrt{s(n)}\leq-\frac{\Delta\sqrt{s(n)}}{\sigma}\right)\\
&\approx&\frac{1}{\sqrt{2\pi}}\frac{\sigma}{\Delta\sqrt{s(n)}}\exp\left(-\frac{1}{2}\left(\frac{\Delta\sqrt{s(n)}}{\sigma}\right)^2\right)\\
&\approx&\frac{1}{\sqrt{2\pi}}\frac{\sigma}{\Delta\sqrt{s(n)}}\exp\left(-\frac{\Delta^2s(n)}{2\sigma^2}\right).
\end{eqnarray*}
Then it is clear why the expected regret has to be logarithmic: $s(n)$ has to be greater than $\log n$ if we want the second term $p_{s(n)}n$ of Equation \eqref{eq:idea} to be sub-logarithmic, but then first term $(1-p_{s(n)})s(n)$ is greater than $\log n$.\\

This idea can be generalized, and this gives the following proposition. 

\begin{prop}
For any policy, there exists $\theta\in \Theta$ and such that 
$$ 
\limsup_{n\to+\infty} \frac{\E_\theta R_n}{\log n}>0.
$$
\end{prop}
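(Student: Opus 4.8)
The plan is to argue by contradiction and reduce everything to the Burnetas--Katehakis lower bound already recalled in Theorem \ref{th:BK}. Suppose the conclusion fails for some policy. Since $R_n\ge 0$, the negation of the statement reads: for every $\theta\in\Theta$ one has $\limsup_{n}\E_\theta R_n/\log n=0$, that is, $\E_\theta R_n=o(\log n)$. The first, cheap observation is that such a policy is automatically \emph{consistent}: for every $a>0$ we have $\log n=o(n^a)$, so $\E_\theta R_n=o(\log n)=o(n^a)$ for all $\theta$. Hence Theorem \ref{th:BK} is applicable to this policy.

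Next I would exhibit one fixed environment $\theta\in\Theta$ carrying a suboptimal arm $k$ with $0<D_k(\theta)<+\infty$. Since every $\Theta_j$ contains all Bernoulli laws, I take arm $1$ to be $Ber(1/2)$, arm $2$ to be $Ber(1/4)$, and (if $K>2$) the remaining arms to be $\delta_0$; then $\mu^*=1/2$ is attained only at arm $1$, and $k=2$ is suboptimal with gap $\Delta_2=1/4$. Granting for the moment that $0<D_2(\theta)<+\infty$, Theorem \ref{th:BK} applied to this $\theta$ and $k=2$ gives $\liminf_{n}\E_\theta[T_2(n)]/\log n\ge 1/D_2(\theta)>0$.

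To close the contradiction I would invoke the regret decomposition \eqref{er}. Because $\E_\theta R_n=\sum_j\Delta_j\E_\theta[T_j(n)]\ge \Delta_2\,\E_\theta[T_2(n)]$, the standing assumption $\E_\theta R_n=o(\log n)$ forces $\E_\theta[T_2(n)]=o(\log n)$, i.e. $\lim_n \E_\theta[T_2(n)]/\log n=0$, which directly contradicts the strictly positive liminf obtained from Theorem \ref{th:BK}. This contradiction proves that some $\theta$ with $\limsup_n\E_\theta R_n/\log n>0$ must exist.

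The only step demanding genuine care, and the one I expect to be the main obstacle, is verifying $0<D_2(\theta)<+\infty$ for the chosen environment. Finiteness is immediate: comparing $\nu_2=Ber(1/4)$ with the competitor $Ber(3/4)\in\Theta_2$ (whose mean exceeds $\mu^*=1/2$) yields a finite Kullback--Leibler divergence, so the infimum defining $D_2(\theta)$ is finite. Positivity is subtler: if $D_2(\theta)=0$ there would be measures $\tilde{\nu}_2^{(m)}\in\Theta_2$ with mean $>1/2$ and $KL(\nu_2,\tilde{\nu}_2^{(m)})\to 0$; by lower semicontinuity of $KL$ this forces $\tilde{\nu}_2^{(m)}\to\nu_2$ weakly, and since all these measures are supported on the compact set $[0,1]$ their means converge, giving $\lim_m \E[\tilde{\nu}_2^{(m)}]=1/4<1/2$, a contradiction. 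Hence $D_2(\theta)>0$, which is exactly what the reduction requires; everything else is a routine chaining of Theorem \ref{th:BK} and formula \eqref{er}.
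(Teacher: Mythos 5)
Your proof is correct in substance, but it is \emph{not} the paper's proof: it is exactly the alternative route that the paper mentions in the remark following the proposition and then deliberately declines to follow. You reduce the statement to the Burnetas--Katehakis bound (Theorem \ref{th:BK}): sub-logarithmic regret everywhere implies consistency, consistency triggers the logarithmic lower bound $\liminf_n \E_\theta[T_2(n)]/\log n \ge 1/D_2(\theta)>0$ in a concrete environment, and formula \eqref{er} closes the contradiction. This is short, but it outsources all the real work to the cited theorem, and it uses the specific structure of $\Theta$ (that each $\Theta_k$ contains Bernoulli laws, so that a suitable environment with $0<D_2(\theta)<\infty$ exists). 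The paper instead gives a self-contained change-of-measure argument: two Bernoulli environments $\theta$ and $\tth$ agreeing on arm $k$, with the likelihood ratio of the other arms bounded below by $\eta>0$, so that $\E_\theta R_n = o(\log n)$ forces $\E_{\tth} R_n \ge \tilde{\Delta}_k \sqrt{n}/4$ for large $n$. What the paper's route buys is twofold: it holds for any $\Theta$ satisfying a mild two-environment property (hence ``beyond our model'', without the product structure or the two-point mixtures), and it yields a quantitative trade-off (sub-logarithmic regret somewhere implies regret of order $\sqrt{n}$ elsewhere), which is strictly more than the stated proposition. Your route buys brevity, at the price of generality and of relying on a deep external theorem.

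One step of yours needs repair. To prove $D_2(\theta)>0$ you write that $KL(\nu_2,\tnu_2^{(m)})\to 0$ ``by lower semicontinuity of $KL$ forces $\tnu_2^{(m)}\to\nu_2$ weakly''; lower semicontinuity alone does not force convergence of the sequence. Either add a compactness step (probability measures on $[0,1]$ are tight, so every subsequence has a weak limit $\mu$; lower semicontinuity then gives $KL(\nu_2,\mu)=0$, hence $\mu=\nu_2$, and all subsequential limits coincide), or, more simply and quantitatively, use Pinsker's inequality: any $\tnu_2$ with mean $>1/2$ satisfies $\|\nu_2-\tnu_2\|_{TV}\ge 1/4$ since means of measures on $[0,1]$ differ by at most the total variation distance, whence $KL(\nu_2,\tnu_2)\ge 2\left(1/4\right)^2=1/8$ and $D_2(\theta)\ge 1/8>0$. (Also, a cosmetic point: the realized regret $R_n$ need not be nonnegative; what you actually use is $\E_\theta R_n\ge 0$, which follows from \eqref{er}.)
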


This result can be seen as a consequence of the main result of \citeauthor{BurKat96} (Theorem \ref{th:BK}): if we assume by contradiction that $\limsup_{n\to+\infty} \frac{\E_\theta R_n}{\log n}=0$ for all $\theta$, the considered policy is therefore consistent, but then the logarithmic lower bounds have to hold. The reason why we wrote the proposition anyway is that our proof is based on the simple reasoning stated above and that it consequently holds beyond our model (see the following for details).\\

\begin{proof}
The proposition results from the following property on $\Theta$: there exists two environments $\theta=(\nu_1,\ldots,\nu_K)$ and $\tilde{\theta}=(\tnu_1,\ldots,\tnu_K)$ and $k\in\set{1}{K}$ such that
\begin{itemize}
\item $k$ has the best mean reward in environment $\theta$,
\item $k$ is not the winning arm in environment $\tth$,
\item $\nu_k=\tnu_k$ and there exists $\eta\in(0,1)$ such that  
\begin{equation}\label{aeta}
\prod_{\ell\neq k}\frac{d\nu_\ell}{d\tnu_\ell}(X_{\ell,1})\geq \eta \ \ \P_{\tth}- a.s.
\end{equation}
\end{itemize}

The idea is the following: in case $\nu_k=\tnu_k$ is likely to be the reward distribution of arm $k$, then arm $k$ has to be pulled often for the regret to be small if the environment is $\theta$, but not so much, as one has to explore to know if the environment is actually $\tth$ (and the third condition ensures that the distinction can be tough to make). The lower bound on exploration is of order $\log n$, as in the sketch in the beginning of the section.

The proof actually holds for any $\Theta$ that has the above-mentioned property (i.e. without the assumptions we made on $\Theta$, i.e. being of the form $\Theta_1\times\ldots\times\Theta_K$ and/or containing distributions of the form $p\delta_a+(1-p)\delta_b$). In our setting, the property is easy to check. Indeed the three conditions hold for any $k$ and any pair of environments $\theta=(\nu_1,\ldots,\nu_K)$, $\tilde{\theta}=(\tnu_1,\ldots,\tnu_K)$ such that each $\nu_\ell$ (resp. $\tnu_\ell$) is a Bernoulli law with parameter $p_\ell$ (resp. $\tilde{p}_\ell$) and such that
\begin{itemize}
\item $\forall \ell\neq k, \ \tilde{p}_k>\tilde{p}_\ell$,
\item $\exists \ell\neq k, \  p_k<p_\ell$,
\item $\tilde{p}_k=p_k$ and $p_\ell, \ \tilde{p}_\ell\in(0,1)$ for any $\ell\neq k$.
\end{itemize}
It is then sufficient to set $$\eta=\left(\min\left\{\frac{p_1}{\tilde{p}_1},\ldots,\frac{p_{k-1}}{\tilde{p}_{k-1}},\frac{p_{k+1}}{\tilde{p}_{k+1}},\ldots,\frac{p_K}{\tilde{p}_K},\frac{1-p_1}{1-\tilde{p}_1},\ldots,\frac{1-p_{k-1}}{1-\tilde{p}_{k-1}},\frac{1-p_{k+1}}{1-\tilde{p}_{k+1}},\ldots,\frac{1-p_K}{1-\tilde{p}_K}\right\}\right)^{K-1},$$
as
$\frac{d\nu_\ell}{d\tnu_\ell}(X_{\ell,1})$ equals $\frac{p_\ell}{\tilde{p}_\ell}$ when $X_{\ell,1}=1$ and $\frac{1-p_\ell}{1-\tilde{p}_\ell}$ when $X_{\ell,1}=0$.\\

We will now compute a lower bound of the expected regret in environment $\tth$. To this aim, we set\\
$$
g(n):=\frac{2\E_\theta R_n}{\Delta}.
$$\\
 In the following, $\tilde{\Delta}_k$ denotes the optimality gap of arm $k$ in environment $\tth$. Moreover the switch from $\tth$ to $\theta$ will result from Equality \eqref{aeta} and from the fact that event $\left\{\sum_{\ell\neq k}T_\ell(n)\leq g(n) \right\}$ is measurable with respect to $X_{\ell,1},\ldots,X_{\ell,\lfloor g(n) \rfloor}$ ($\ell\neq k$) and to $X_{k,1},\ldots,X_{k,n}$. That enables us to introduce the function $q$ such that 
$$
\IND_{\left\{\sum_{\ell\neq k}T_\ell(n)\leq g(n) \right\}}=q\big((X_{k,s})_{s=1..n}, (X_{\ell,s})_{\ell\neq k, \ s=1..\lfloor g(n) \rfloor}\big)
$$
and to write:
\begin{eqnarray*}
\E_{\tth}R_n&\geq&\tilde{\Delta}_k\E_{\tth}[T_k(n)]\geq \tilde{\Delta}_k(n-g(n))\P_{\tth}\left(T_k(n)\geq n-g(n)\right)\\
&=& \tilde{\Delta}_k(n-g(n))\P_{\tth}\left(\sum_{\ell\neq k}T_\ell(n)\leq g(n)\right)\\
&=& \tilde{\Delta}_k(n-g(n))\int q\big((x_{\ell,s})_{\ell\neq k, \ s=1..t},(x_{k,s})_{s=1..n}\big)\hspace{-.5cm}\prod_{\tiny \begin{array}c \ell\neq k \nonumber\\
s=1..\lfloor g(n) \rfloor\end{array}}\hspace{-.5cm}d\tnu_\ell(x_{\ell,s})\prod_{s=1..n}d\tnu_k(x_{k,s})\\
&\geq& \tilde{\Delta}_k(n-g(n))\eta^{\lfloor g(n) \rfloor} \int q\big((x_{\ell,s})_{\ell\neq k, \ s=1..t},(x_{k,s})_{s=1..n}\big)\hspace{-.6cm}\prod_{\tiny \begin{array}c \ell\neq k \nonumber\\
s=1..\lfloor g(n) \rfloor\end{array}}\hspace{-.6cm}d\nu_\ell(x_{\ell,s})\prod_{s=1..n}d\nu_k(x_{k,s})\\
&\geq& \tilde{\Delta}_k(n-g(n))\eta^{g(n)}\P_{\theta}\left(\sum_{\ell\neq k}T_\ell(n)\leq g(n)\right)\\
&=& \tilde{\Delta}_k(n-g(n))\eta^{g(n)}
\left(1-\P_{\theta}\left(\sum_{\ell\neq k}T_\ell(n)> g(n)\right)
\right)\\
&\geq& \tilde{\Delta}_k(n-g(n))\eta^{g(n)}
\left(1-\frac{
\E_{\theta}\left(\sum_{\ell\neq k}T_\ell(n)\right)}{g(n)}
\right)
\\
&\geq& \tilde{\Delta}_k(n-g(n))\eta^{g(n)}
\left(1-\frac{
\E_{\theta}\left(\sum_{\ell\neq k}\Delta_\ell T_\ell(n)\right)}{\Delta g(n)}
\right)
\\
&\geq& \tilde{\Delta}_k(n-g(n))\eta^{g(n)}
\left(1-\frac{
\E_{\theta}R_n}{\Delta g(n)}
\right)= \tilde{\Delta}_k\frac{n-g(n)}{2}\eta^{g(n)},\\
\end{eqnarray*}
where the very first inequality is a consequence of Formula \eqref{er}.\\
We are now able to conclude. Indeed, if we assume that $\frac{\E_\theta R_n}{\log n}\xrightarrow[n\to+\infty]{}0$, then one has $g(n)\leq\min\left(\frac{n}{2},\frac{-\log n}{2\log \eta}\right)$ for $n$ large enough and:
$$
\E_{\tth}R_n\geq \tilde{\Delta}_k\frac{n-g(n)}{2}\eta^{g(n)}\geq \tilde{\Delta}_k\frac{n}{4}\eta^{\frac{-\log n}{2\log \eta}}=\tilde{\Delta}_k\frac{\sqrt{n}}{4}.
$$
In particular, we have $\frac{\E_{\tth} R_n}{\log n}\xrightarrow[n\to+\infty]{}+\infty$, hence the conclusion.

\end{proof}
To finish this section, note that a proof could have been written in the same way with a slightly different property on $\Theta$: there exists two environments $\theta=(\nu_1,\ldots,\nu_K)$ and $\tilde{\theta}=(\tnu_1,\ldots,\tnu_K)$ and $k\in\set{1}{K}$ such that
\begin{itemize}
\item $k$ has the best mean reward in environment $\theta$,
\item $k$ is not the winning arm in environment $\tth$,
\item $\nu_\ell=\tnu_\ell$ for all $\ell\neq k$ and there exists $\eta\in(0,1)$ such that  
\begin{equation*}
\frac{d\nu_k}{d\tnu_k}(X_{k,1})\geq \eta \ \ \P_{\tth}- a.s.
\end{equation*}
\end{itemize}
The dilemma is then between exploring arm $k$ or pulling the best arm of environment $\tth$.

\subsection{There are no logarithmic bound in general}
We extend our study to more general {\sc ucb} policies, and we will find that there does not exist logarithmic lower bounds of the expected regret in the case of Hannan consistency. With "{\sc ucb}", we now refer to an {\sc ucb} policy with indexes of the form:
$$
B_{k,s,t}= \hX_{k,s}+\sqrt{\frac{f_k(t)}{s}}
$$
where functions $f_k$ ($1\leq k\leq K$) are increasing.\\

To find conditions for Hannan consistency, let us first show the following upper bound.

\begin{lemm}\label{lemma}
If arm k does not have the best mean reward, then for any $\beta\in (0,1)$ the following upper bound holds:
$$
\E_\theta[T_k(n)]\leq u + \sum_{t=u+1}^n\left(1+\frac{\log t}{\log(\frac{1}{\beta})}\right)\left(e^{-2\beta f_k(t)}+e^{-2\beta f_{k^*}(t)}\right),
$$
where $u=\left\lceil\frac{4f_k(n)}{\Delta_k}\right\rceil$.
\end{lemm}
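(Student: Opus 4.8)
The plan is to follow the upper-bound half of the proof of Theorem~\ref{th:ubr} line for line, replacing the specific exploration term $\rho\log t$ by the general functions $f_k(t)$ and $f_{k^*}(t)$, and invoking monotonicity of the $f_k$ in the single place where that proof used $f_k(t)\le f_k(n)$. The engine is again a union bound over three ``bad'' events followed by a peeling argument controlled by Hoeffding--Azuma's maximal inequality.

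First I would establish that a suboptimal arm $k$ can be pulled at round $t$ only if at least one of the events
$$\xi_{1,t}:\ B_{k^*,T_{k^*}(t-1),t}\le\mu^*,\quad \xi_{2,t}:\ \hX_{k,T_k(t-1)}\ge\mu_k+\sqrt{\tfrac{f_k(t)}{T_k(t-1)}},\quad \xi_{3,t}:\ T_k(t-1)<\tfrac{4f_k(n)}{\Delta_k^2}$$
occurs. Indeed, if all three fail then, since $f_k(t)\le f_k(n)$ and the negation of $\xi_{3,t}$ give $\Delta_k\ge 2\sqrt{f_k(t)/T_k(t-1)}$, one obtains
$$B_{k^*,T_{k^*}(t-1),t}>\mu^*=\mu_k+\Delta_k\ge\mu_k+2\sqrt{\tfrac{f_k(t)}{T_k(t-1)}}>\hX_{k,T_k(t-1)}+\sqrt{\tfrac{f_k(t)}{T_k(t-1)}}=B_{k,T_k(t-1),t},$$
so $k$ is not selected. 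Writing $u=\lceil 4f_k(n)/\Delta_k^2\rceil$ (the threshold past which $\xi_{3,t}$ becomes impossible) and isolating the first $u$ pulls of arm $k$ then yields
$$\E_\theta[T_k(n)]\le u+\sum_{t=u+1}^n\big(\Pro(\xi_{1,t})+\Pro(\xi_{2,t})\big).$$

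It then remains to bound $\Pro(\xi_{1,t})$ and $\Pro(\xi_{2,t})$ by peeling. For the former I would bound it by $\Pro(\exists s\le t:\ \hX_{k^*,s}+\sqrt{f_{k^*}(t)/s}\le\mu^*)$, cut $\{1,\dots,t\}$ along the geometric grid $\beta^{j+1}t<s\le\beta^j t$ for $0\le j\le \log t/\log(1/\beta)$, bound the deviation on each slice from below by $\sqrt{f_{k^*}(t)\,\beta^{j+1}t}$ and the range by $\beta^j t$, and apply Hoeffding--Azuma. The exponent $2(\beta^{j+1}t\,f_{k^*}(t))/(\beta^j t)=2\beta f_{k^*}(t)$ is independent of $j$, so each of the at most $1+\log t/\log(1/\beta)$ slices contributes $e^{-2\beta f_{k^*}(t)}$, giving $\Pro(\xi_{1,t})\le(1+\log t/\log(1/\beta))\,e^{-2\beta f_{k^*}(t)}$. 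The identical computation with $f_k$ in place of $f_{k^*}$ bounds $\Pro(\xi_{2,t})$ by $(1+\log t/\log(1/\beta))\,e^{-2\beta f_k(t)}$, and substituting both into the previous display produces exactly the claimed inequality.

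The step I expect to require the most care---though it is bookkeeping rather than a genuine difficulty---is the peeling: one must check that the geometric spacing makes the per-slice exponent equal to $2\beta f(t)$ independently of $j$ (this is precisely what keeps the bound from deteriorating across slices) and count the slices as $1+\log t/\log(1/\beta)$. One must also treat $\xi_{2,t}$ by union-bounding over the possible values $s=T_k(t-1)\in\{1,\dots,t\}$ of a \emph{random} index, so that for each fixed $s$ the empirical mean $\hX_{k,s}$ is a genuine average of $s$ i.i.d.\ rewards and Hoeffding--Azuma applies to the associated martingale.
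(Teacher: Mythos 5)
Your proposal is correct and follows essentially the same route as the paper's own proof, which simply notes that arm $k$ can be selected only under one of the same three bad events and then says the rest (the adaptation of the peeling/Hoeffding--Azuma argument leading to Equation \eqref{rb} in Theorem \ref{th:ubr}) follows straightforwardly --- you have merely written out those details. One remark: you prove the bound with $u=\left\lceil 4f_k(n)/\Delta_k^2\right\rceil$, which is what the third event and the analogue in Theorem \ref{th:ubr} actually require, whereas the lemma statement as printed has $\Delta_k$ rather than $\Delta_k^2$ in the denominator --- an apparent typo in the paper, so your version is the right one.
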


\begin{proof}
We adapt the arguments leading to Equation \eqref{rb} in the proof of Theorem \ref{th:ubr}. We begin by noticing that, if arm $k$ is selected, then at least one of the three following equations holds:
\begin{eqnarray*}
B_{k^*,T_{k^*}(t-1),t}\leq \mu^*,\\  
\hat X_{k,t}\geq \mu_k+\sqrt{\frac{f_k(t)}{T_k(t-1)}},\\
T_k(t-1)<\frac{4f_k(n)}{\Delta_k^2},
\end{eqnarray*}
and the rest follows straightforwardly.
\end{proof}

We are now able to give sufficient conditions on the $f_k$ for {\sc ucb} to be Hannan consistent.

\begin{prop}
If $f_k(n)=o(n)$ for all $k\in\set{1}{K}$, and if there exists $\gamma>\frac{1}{2}$ and $N\geq 1$ such that $f_k(n)\geq \gamma \log\log n$ for all $k\in\set{1}{K}$ and for any $n\geq N$, then {\sc ucb} is Hannan consistent.\\
\end{prop}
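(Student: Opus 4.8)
The plan is to reduce Hannan consistency to a statement about individual sampling times and then to invoke Lemma \ref{lemma}. By Formula \eqref{er} one has $\E_\theta[R_n]=\sum_k \Delta_k\,\E_\theta[T_k(n)]$, where the gaps $\Delta_k$ are fixed constants depending only on $\theta$. Since there are finitely many arms and $\Delta_{k^*}=0$, it suffices to prove that $\E_\theta[T_k(n)]=o(n)$ for every suboptimal arm $k$; summing over the finitely many suboptimal arms then gives $\E_\theta[R_n]=o(n)$, which is exactly Hannan consistency.

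Fix a suboptimal arm $k$. Since $\gamma>\frac12$, I would first choose $\beta\in(0,1)$ with $\beta\gamma>\frac12$; this is possible because the interval $\big(\frac{1}{2\gamma},1\big)$ is nonempty precisely when $\gamma>\frac12$. Applying Lemma \ref{lemma} with this $\beta$ yields
\[
\E_\theta[T_k(n)]\leq u+\sum_{t=u+1}^n\left(1+\frac{\log t}{\log(1/\beta)}\right)\left(e^{-2\beta f_k(t)}+e^{-2\beta f_{k^*}(t)}\right),
\]
with $u=\big\lceil 4 f_k(n)/\Delta_k^2\big\rceil$. The first term is harmless: the hypothesis $f_k(n)=o(n)$ immediately gives $u=o(n)$. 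Note also that $f_k(n)\geq\gamma\log\log n\to+\infty$ forces $u\to+\infty$, so for $n$ large enough every index in the sum satisfies $t\geq N$ and the lower bound on the exploration functions is available throughout.

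For the sum, the idea is to use $f_k(t),f_{k^*}(t)\geq\gamma\log\log t$ to convert the exponentials into negative powers of $\log t$. Indeed, for $t\geq N$ one has $e^{-2\beta f_k(t)}\leq e^{-2\beta\gamma\log\log t}=(\log t)^{-2\beta\gamma}$, and similarly for $f_{k^*}$, so each summand is bounded by $2\big(1+\frac{\log t}{\log(1/\beta)}\big)(\log t)^{-2\beta\gamma}$. Because $1-2\beta\gamma<0$ by the choice of $\beta$, this bound is dominated by a constant multiple of $(\log t)^{1-2\beta\gamma}$, which tends to $0$ as $t\to+\infty$. A sequence tending to $0$ has partial sums that grow sublinearly, so the whole sum is $o(n)$. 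Combining the two estimates gives $\E_\theta[T_k(n)]=o(n)$, as required.

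The only genuine care needed is in the choice of $\beta$: the two requirements $\beta<1$ (so that $\log(1/\beta)>0$ and Lemma \ref{lemma} is meaningful) and $\beta\gamma>\frac12$ (so that the exponent $1-2\beta\gamma$ is negative and the summand vanishes) must hold simultaneously, and this is exactly what the assumption $\gamma>\frac12$ guarantees. I expect this balancing of $\beta$ to be the main, though modest, obstacle; everything else reduces to the elementary facts that $f_k(n)=o(n)$ kills the term $u$ and that a term-wise vanishing series has $o(n)$ partial sums.
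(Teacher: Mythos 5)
Your proof is correct and follows essentially the same route as the paper's: the same reduction via Formula \eqref{er} to showing $\E_\theta[T_k(n)]=o(n)$ for each suboptimal arm, the same choice of $\beta\in(0,1)$ with $2\beta\gamma>1$, the same application of Lemma \ref{lemma}, and the same conversion of the exponentials into powers $(\log t)^{-2\beta\gamma}$ via the hypothesis $f_k(t)\geq\gamma\log\log t$. The one place you diverge is the final estimate of the sum: the paper establishes the precise asymptotic $\sum_{t=2}^n(\log t)^{-c}\sim n/(\log n)^{c}$ (by comparison with $\int_2^n dx/(\log x)^{c}$ and an integration by parts), yielding the explicit bound $\E_\theta[T_k(n)]\leq\lceil 4f_k(n)/\Delta_k\rceil+Cn/(\log n)^{2\beta\gamma-1}$, whereas you invoke the more elementary Ces\`aro fact that a sequence tending to $0$ has $o(n)$ partial sums. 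Your shortcut is perfectly sufficient for Hannan consistency, and your handling of the small indices $t<N$ --- observing that $u\to+\infty$ pushes every summation index past $N$ for $n$ large --- is sound; all you give up relative to the paper is the explicit rate $n/(\log n)^{2\beta\gamma-1}$, which the paper's sharper summation estimate provides but which the statement does not require.
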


\begin{proof}
Fix an index $k$ of a suboptimal arm and choose $\beta\in(0,1)$ such that $2\beta\gamma>1$. By means of Lemma \ref{lemma}, one has for $n$ large enough:
$$
\E_\theta[T_k(n)]\leq u + 2\sum_{t=u+1}^n\left(1+\frac{\log t}{\log(\frac{1}{\beta})}\right)e^{-2\beta\gamma\log\log t},
$$
where $u=\left\lceil\frac{4f_k(n)}{\Delta_k}\right\rceil$.\\
Consequently, we have:
\begin{equation}\label{eq:...}
\E_\theta[T_k(n)]\leq u + 2\sum_{t=2}^n\left(\frac{1}{(\log t)^{2\beta\gamma}}+\frac{1}{\log(\frac{1}{\beta})}\frac{1}{(\log t)^{2\beta\gamma-1}}\right).
\end{equation}

Sums of the form $\sum_{t=2}^n\frac{1}{(\log t)^{c}}$ with $c>0$ are equivalent to $\frac{n}{(\log n)^{c}}$ as $n\to +\infty$. Indeed, on the one hand we have
$$
\sum_{t=3}^n\frac{1}{(\log t)^{c}} \leq \int_2^n \frac{dx}{(\log x)^{c}} \leq \sum_{t=2}^n\frac{1}{(\log t)^{c}},
$$
so that $\sum_{t=2}^n\frac{1}{(\log t)^{c}}\sim\int_2^n \frac{dx}{(\log x)^{c}}$. On the other hand, one can write
$$
\int_2^n \frac{dx}{(\log x)^{c}}=\left[\frac{x}{(\log x)^{c}}\right]_2^n+c\int_2^n \frac{dx}{(\log x)^{c+1}}.
$$
As both integrals are divergent we have $\int_2^n \frac{dx}{(\log x)^{c}}=o\left(\int_2^n \frac{dx}{(\log x)^{c+1}}\right)$, so that $\int_2^n \frac{dx}{(\log x)^{c}}\sim \frac{n}{(\log n)^{c}}$.\\

Now, by means of Equation \eqref{eq:...}, there exists $C>0$ such that
$$
\E_\theta[T_k(n)]\leq \left\lceil\frac{4f_k(n)}{\Delta}\right\rceil + \frac{Cn}{(\log n)^{2\beta\gamma-1}},
$$
and this proves Hannan consistency.
\end{proof}

The fact that there is no logarithmic lower bound then comes from the following proposition (which is a straightforward adaptation of Propostion \ref{UCBsimplecase}).

\begin{prop}
Let $0\le b<a\le1$ and $n\ge 1$. For $\theta=(\delta_a,\delta_b)$,
the random variable $T_2(n)$ is uniformly upper bounded by $\frac{f_2(n)}{\Delta^2}+1$.
Consequently, the expected regret of {\sc ucb} is upper bounded by $\frac{f_2(n)}{\Delta}+1$.
\end{prop}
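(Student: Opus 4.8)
The plan is to mimic verbatim the contradiction argument used for Proposition \ref{UCBsimplecase}, replacing the exploration term $\rho\log t$ of {\sc ucb}($\rho$) by the general increasing function $f_2(t)$ (while keeping in mind that the best arm now carries its own, possibly different, function $f_1$). In the environment $\theta=(\delta_a,\delta_b)$ every draw is deterministic, so $\hX_{1,s}=a$ and $\hX_{2,s}=b$ for all $s\ge1$; hence $T_2(n)$ is itself deterministic, which is why the claim can be stated uniformly and why the distinction between $T_2(n)$ and $\E_\theta[T_2(n)]$ is immaterial here.

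First I would dispose of the base cases $n=1,2$, for which the claim is immediate since $T_2(1)=0$ and $T_2(2)=1$ while $f_2\ge0$. Then, arguing by contradiction, I would suppose the bound first fails at some time $t\ge3$, i.e. $T_2(t)>\frac{f_2(t)}{\Delta^2}+1$ while $T_2(t-1)\le\frac{f_2(t-1)}{\Delta^2}+1$. Because $f_2$ is increasing, $\frac{f_2(t)}{\Delta^2}+1\ge\frac{f_2(t-1)}{\Delta^2}+1\ge T_2(t-1)$, so $T_2(t)>T_2(t-1)$, which forces arm $2$ to be pulled at round $t$ and hence $T_2(t)=T_2(t-1)+1$.

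The pull of arm $2$ at round $t$ means its index dominates that of arm $1$:
$$b+\sqrt{\tfrac{f_2(t)}{T_2(t-1)}}\ \ge\ a+\sqrt{\tfrac{f_1(t)}{T_1(t-1)}}\ \ge\ a.$$
Dropping the nonnegative exploration term of arm $1$ is the one point requiring a moment's attention, since the two arms may use distinct functions $f_1\neq f_2$; but discarding it only weakens the right-hand side, so the inequality remains valid. Rearranging yields $\sqrt{f_2(t)/T_2(t-1)}\ge a-b=\Delta$, that is $T_2(t-1)\le f_2(t)/\Delta^2$, whence $T_2(t)=T_2(t-1)+1\le\frac{f_2(t)}{\Delta^2}+1$, contradicting the choice of $t$. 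This establishes the uniform upper bound on $T_2(n)$.

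Finally, the regret statement is a one-line consequence of Formula \eqref{er}: since only arm $2$ is suboptimal, with gap $\Delta_2=\Delta$, one has $\E_\theta[R_n]=\Delta\,T_2(n)\le\Delta\big(\tfrac{f_2(n)}{\Delta^2}+1\big)=\tfrac{f_2(n)}{\Delta}+1$. I expect no genuine obstacle, as the proof is a direct transcription of Proposition \ref{UCBsimplecase}; the only substantive differences are the bookkeeping of two possibly different exploration functions $f_1,f_2$ and the use of the monotonicity of $f_2$ exactly where monotonicity of $\log$ was invoked before.
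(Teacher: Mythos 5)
Your proof is correct and is exactly the adaptation the paper intends: the paper offers no separate argument, saying only that the result is ``a straightforward adaptation'' of Proposition \ref{UCBsimplecase}, and your contradiction argument reproduces that proof verbatim with $f_2(t)$ in place of $\rho\log t$, invoking monotonicity of $f_2$ where monotonicity of $\log$ was used and correctly noting that arm 1's (possibly different) exploration term can be dropped since it is nonnegative. One cosmetic slip in the last line: $\Delta\left(\frac{f_2(n)}{\Delta^2}+1\right)=\frac{f_2(n)}{\Delta}+\Delta$, which is $\le\frac{f_2(n)}{\Delta}+1$ because $\Delta\le 1$, rather than equal to it.
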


Then, if $f_1(n)=f_2(n)=\log\log n$, {\sc ucb} is Hannan consistent and the expected regret is of order $\log\log n$ in all environments of the form $(\delta_a,\delta_b)$. Hence the conclusion on the non-existence of logarithmic lower bounds.

\bibliography{bandit_stat}

\begin{thebibliography}{22}
\providecommand{\natexlab}[1]{#1}
\providecommand{\url}[1]{\texttt{#1}}
\expandafter\ifx\csname urlstyle\endcsname\relax
  \providecommand{\doi}[1]{doi: #1}\else
  \providecommand{\doi}{doi: \begingroup \urlstyle{rm}\Url}\fi

\bibitem[Agrawal(1995)]{Agr95}
R.~Agrawal.
\newblock Sample mean based index policies with o(log n) regret for the
  multi-armed bandit problem.
\newblock \emph{Advances in Applied Mathematics}, 27:\penalty0 1054--1078,
  1995.

\bibitem[Akaike(1973)]{akaike1973information}
H.~Akaike.
\newblock Information theory and an extension of the maximum likelihood
  principle.
\newblock In \emph{Second international symposium on information theory},
  volume~1, pages 267--281. Springer Verlag, 1973.

\bibitem[Audibert et~al.(2009)Audibert, Munos, and
  Szepesv{\'a}ri]{audibert2009exploration}
J.-Y. Audibert, R.~Munos, and C.~Szepesv{\'a}ri.
\newblock {Exploration-exploitation tradeoff using variance estimates in
  multi-armed bandits}.
\newblock \emph{Theoretical Computer Science}, 410\penalty0 (19):\penalty0
  1876--1902, 2009.

\bibitem[Auer et~al.(2002)Auer, Cesa-Bianchi, and Fischer]{AuCeFi02}
P.~Auer, N.~Cesa-Bianchi, and P.~Fischer.
\newblock Finite-time analysis of the multiarmed bandit problem.
\newblock \emph{Mach. Learn.}, 47\penalty0 (2-3):\penalty0 235--256, 2002.

\bibitem[Auer et~al.(2003)Auer, Cesa-Bianchi, Freund, and
  Schapire]{auer2003nonstochastic}
P.~Auer, N.~Cesa-Bianchi, Y.~Freund, and R.E. Schapire.
\newblock The nonstochastic multiarmed bandit problem.
\newblock \emph{SIAM Journal on Computing}, 32\penalty0 (1):\penalty0 48--77,
  2003.

\bibitem[Bergemann and Valimaki(2008)]{BerVal08}
D.~Bergemann and J.~Valimaki.
\newblock Bandit problems.
\newblock 2008.
\newblock {In The New Palgrave Dictionary of Economics, 2nd ed. Macmillan
  Press}.

\bibitem[Bubeck(2010)]{bubeckthesis}
S.~Bubeck.
\newblock \emph{Bandits Games and Clustering Foundations}.
\newblock PhD thesis, Universit\'e Lille 1, France, 2010.

\bibitem[Bubeck et~al.(2009)Bubeck, Munos, Stoltz, and Szepesvari]{Bub08}
S.~Bubeck, R.~Munos, G.~Stoltz, and C.~Szepesvari.
\newblock Online optimization in {X}-armed bandits.
\newblock In \emph{Advances in Neural Information Processing Systems 21}, pages
  201--208. 2009.

\bibitem[Burnetas and Katehakis(1996)]{BurKat96}
A.N. Burnetas and M.N. Katehakis.
\newblock Optimal adaptive policies for sequential allocation problems.
\newblock \emph{Advances in Applied Mathematics}, 17\penalty0 (2):\penalty0
  122--142, 1996.

\bibitem[Cesa-Bianchi and Lugosi(2006)]{cesa2006prediction}
N.~Cesa-Bianchi and G.~Lugosi.
\newblock \emph{Prediction, learning, and games}.
\newblock Cambridge Univ Pr, 2006.

\bibitem[Coquelin and Munos(2007)]{Coq07}
P.A. Coquelin and R.~Munos.
\newblock {Bandit algorithms for tree search}.
\newblock In \emph{{U}ncertainty in {A}rtificial {I}ntelligence}, 2007.

\bibitem[Gelly and Wang(2006)]{Gel06}
S.~Gelly and Y.~Wang.
\newblock {Exploration exploitation in go: UCT for Monte-Carlo go}.
\newblock In \emph{Online trading between exploration and exploitation
  Workshop, Twentieth Annual Conference on Neural Information Processing
  Systems (NIPS 2006)}, 2006.

\bibitem[Honda and Takemura(2010)]{HonTak10}
J.~Honda and A.~Takemura.
\newblock An asymptotically optimal bandit algorithm for bounded support
  models.
\newblock In \emph{Proceedings of the Twenty-Third Annual Conference on
  Learning Theory (COLT)}, 2010.

\bibitem[Kleinberg et~al.(2008)Kleinberg, Slivkins, and Upfal]{Kle08}
R.~Kleinberg, A.~Slivkins, and E.~Upfal.
\newblock {Multi-armed bandits in metric spaces}.
\newblock In \emph{Proceedings of the 40th annual ACM symposium on Theory of
  computing}, pages 681--690, 2008.

\bibitem[{Kleinberg}(2005)]{Kle05}
R.~D. {Kleinberg}.
\newblock Nearly tight bounds for the continuum-armed bandit problem.
\newblock In \emph{Advances in Neural Information Processing Systems 17}, pages
  697--704. 2005.

\bibitem[Lai and Robbins(1985)]{LaiRo85}
T.~L. Lai and H.~Robbins.
\newblock Asymptotically efficient adaptive allocation rules.
\newblock \emph{Advances in Applied Mathematics}, 6:\penalty0 4--22, 1985.

\bibitem[Lamberton et~al.(2004)Lamberton, Pag{\`e}s, and
  Tarr{\`e}s]{LamPagTar04}
D.~Lamberton, G.~Pag{\`e}s, and P.~Tarr{\`e}s.
\newblock {When can the two-armed bandit algorithm be trusted?}
\newblock \emph{Annals of Applied Probability}, 14\penalty0 (3):\penalty0
  1424--1454, 2004.

\bibitem[Mallows(1973)]{mallows1973some}
C.L. Mallows.
\newblock Some comments on cp.
\newblock \emph{Technometrics}, pages 661--675, 1973.

\bibitem[Robbins(1952)]{ro52}
H.~Robbins.
\newblock Some aspects of the sequential design of experiments.
\newblock \emph{Bulletin of the American Mathematics Society}, 58:\penalty0
  527--535, 1952.

\bibitem[Salomon and Audibert(2011)]{salomon2011deviations}
A.~Salomon and J.Y. Audibert.
\newblock Deviations of stochastic bandit regret.
\newblock In \emph{Algorithmic Learning Theory}, pages 159--173. Springer,
  2011.

\bibitem[Schwarz(1978)]{schwarz1978estimating}
G.~Schwarz.
\newblock Estimating the dimension of a model.
\newblock \emph{The annals of statistics}, 6\penalty0 (2):\penalty0 461--464,
  1978.

\bibitem[Thompson(1933)]{thompson1933likelihood}
W.R. Thompson.
\newblock On the likelihood that one unknown probability exceeds another in
  view of the evidence of two samples.
\newblock \emph{Biometrika}, 25\penalty0 (3/4):\penalty0 285--294, 1933.

\end{thebibliography}

\end{document}